\theoremstyle{plain}
\newtheorem{theorem}{Theorem}[section]
\newtheorem{lemma}[theorem]{Lemma}
\theoremstyle{definition}
\theoremstyle{remark}
\def\vu{{\bm{u}}}
\def\vw{{\bm{w}}}
\def\vx{{\bm{x}}}
\newcommand{\E}{\mathbb{E}}
\icmltitlerunning{Confidence-aware Contrastive Learning for Selective Classification}
\begin{document}

\twocolumn[
\icmltitle{Confidence-aware Contrastive Learning for Selective Classification}



\icmlsetsymbol{equal}{*}

\begin{icmlauthorlist}
\icmlauthor{Yu-Chang Wu}{yyy,comp}
\icmlauthor{Shen-Huan Lyu}{sch,zzz,yyy}
\icmlauthor{Haopu Shang}{yyy,comp}
\icmlauthor{Xiangyu Wang}{yyy,comp}
\icmlauthor{Chao Qian}{yyy,comp}
\end{icmlauthorlist}

\icmlaffiliation{yyy}{National Key Laboratory for Novel Software Technology, Nanjing University, China}
\icmlaffiliation{comp}{ School of Artificial Intelligence, Nanjing University, China}
\icmlaffiliation{zzz}{College of Computer Science and Software Engineering, Hohai University, China}
\icmlaffiliation{sch}{Key Laboratory of Water Big Data Technology of Ministry of Water Resources, Hohai University, China}

\icmlcorrespondingauthor{Chao Qian}{qianc@nju.edu.cn}

\icmlkeywords{Machine Learning, ICML}

\vskip 0.3in
]




\begin{abstract}
Selective classification enables models to make predictions only when they are sufficiently confident, aiming to enhance safety and reliability, which is important in high-stakes scenarios. Previous methods mainly use deep neural networks and focus on modifying the architecture of classification layers to enable the model to estimate the confidence of its prediction. This work provides a generalization bound for selective classification, disclosing that optimizing feature layers helps improve the performance of selective classification. Inspired by this theory, we propose to explicitly improve the selective classification model at the feature level for the first time, leading to a novel Confidence-aware Contrastive Learning method for Selective Classification, CCL-SC, which similarizes the features of homogeneous instances and differentiates the features of heterogeneous instances, with the strength controlled by the model's confidence. The experimental results on typical datasets, i.e., CIFAR-10, CIFAR-100, CelebA, and ImageNet, show that CCL-SC achieves significantly lower selective risk than state-of-the-art methods, across almost all coverage degrees. Moreover, it can be combined with existing methods to bring further improvement.
\end{abstract}
\printAffiliationsAndNotice{}  
\section{Introduction}

As Deep Neural Networks (DNNs) have been widely adopted across various industries, the reliability of their predictive outcomes has become increasingly critical. In many high-stakes domains such as medical diagnosis~\cite{Nature:medicalDiagnosis}, self-driving~\cite{self-driving}, or security systems~\cite{security-system}, erroneous predictions may lead to serious repercussions~\cite{AIsafty:ConcreteProblem}. The concept of selective classification for DNNs emerges in this context, providing a mechanism that allows a DNN to decide whether to make a prediction on an instance based on its prediction confidence estimation~\cite{Problem:El-YanivW10}. The goal of selective classification typically revolves around minimizing the model's selective risk while maintaining a high prediction coverage rate~\cite{GeifmanE17:coverage}.


The key issue in selective classification is how to select samples that may be predicted incorrectly and hand them over to humans for delayed prediction. A direct method is to use the maximum logit in the Softmax Layer (SR)~\cite{SR,GeifmanE17:coverage} of the model as the confidence function; a higher value indicates that the model is more confident in predicting the sample. 
Another approach is to utilize the inferences of multiple models to estimate the prediction confidence, such as \mbox{MC-dropout}~\cite{MC-dropout}, deep ensemble~\cite{DeepEnsemble}, and snapshot ensemble~\cite{snapshot-ensemble}. Given the expensive training or prediction costs, recent works predominantly focus on individual selective classification models. \mbox{SelectiveNet (SN)}~\cite{SelectiveNet} introduces an additional selection head to learn the confidence of predictions within a given coverage constraint. \mbox{Deep Gamblers (DG)}~\cite{DeepGambler} and \mbox{Self-Adaptive Training (SAT)}~\cite{NIPS:SAT,SAT} add a homogeneous logit to the output layer, serving as an ``abstention head" to predict the confidence of abstaining from making predictions.~\citet{Entropy+SR} proposed an additional Entropy-Minimization (EM) regularization loss to make the model more confident in its predictions, and applied it to the SAT method. However, their results suggest that the state-of-the-art selective classification methods with explicit selective heads actually lead to higher selective risk compared to directly using SR for confidence prediction. 

In this work, we provide a generalization bound for selective classification, disclosing that optimizing feature layers to reduce variance between samples of the same category is helpful for improving the performance. In addition, the selective classification problem inherently requires models to better differentiate between correctly classified and misclassified samples, and also requires consistency between the predictive confidence and the reliability of the classification results. Based on these analyses, we propose to improve the performance of selective classification by explicitly optimizing the feature representation of the model for the first time, instead of focusing on modifying the classification layer as in previous works. Specifically, we propose a novel Confidence-aware Contrastive Learning method for Selective Classification named CCL-SC, which aims to pull normalized feature embeddings from the same class that are correctly classified to be closer than embeddings that are misclassified as the same class. The ``pulling strength" is controlled by the model's confidence, i.e., the model pays more attention to samples with higher confidence during training, leading to a robust alignment between the model's predictive confidence and its actual accuracy.

We conduct experiments to compare our method CCL-SC with SR~\cite{SR,GeifmanE17:coverage}, DG~\cite{DeepGambler}, SAT~\cite{NIPS:SAT,SAT}, and SAT+EM~\cite{Entropy+SR} on four typical datasets including CIFAR-10, CIFAR-100, CelebA, and ImageNet. The results show that CCL-SC achieves significantly lower selective risk than these SOTA methods across almost all degrees of coverage. The t-SNE visualization clearly shows that CCL-SC achieves better feature representation, i.e., significant intra-class aggregation and inter-class separation in the embedded feature space. We also perform comprehensive sensitivity analyses of the hyper-parameters, demonstrating the robustness of CCL-SC, and the alignment between our our proposed method and theory. It is noteworthy that our method CCL-SC optimizes the model from a different perspective compared to previous methods, and thus it can effectively leverage techniques from existing methods to further enhance the performance of selective classification, which is empirically verified by combining with SAT~\cite{NIPS:SAT} and EM~\cite{Entropy+SR}.



\section{Related Work}


\subsection{Selective Classification}\label{sec-sc related}
Selective classification, also known as confidence-based classification, or classification with reject option~\cite{fitst-work}, allows a model to make predictions only when it is sufficiently confident, which has been extensively studied across multiple domains in machine learning, such as support vector machines~\cite{svm2008nips}, boosting~\cite{boostingnips2016}, nearest neighbours~\cite{knn1970}, online learning~\cite{onlinelearningpmlr}, and human assisted learning~\cite{liudx}.

With the widespread application of deep learning, the concept of selective classification for DNNs has been receiving increasing attention, especially in situations where incorrect predictions may lead to serious consequences.~\citet{GeifmanE17:coverage} proposed a method for converting trained DNNs into selective classifiers by employing two confidence functions, SR (defined as the maximal logit in the softmax layer) and MC-dropout (defined as the negative variance of aggregated predictive probabilities). 

Another type of selective classification method for DNNs is to modify the classification layer and train an additional selection head (or abstention logit). SN~\cite{SelectiveNet} is a three-headed network that includes prediction, selection, and auxiliary head, where the selection head is optimized to estimate the model's confidence in prediction for a given target coverage. DG~\cite{DeepGambler} expands the original $m$-class problem to a $(m+1)$-class problem, and uses the extra class to estimate the confidence of the model in abstention. Similarly, SAT~\cite{NIPS:SAT,SAT} also focuses its selection mechanism on the extra class and introduces a soft label-based training mechanism to guide the model in selecting which samples to abstain from predicting. However, recent experimental findings by~\cite{Entropy+SR} have shown that the methods utilizing their explicit selection heads as the confidence function are actually sub-optimal, and suggest using SR instead. 

While we have been focusing on selective classification, there are two important related topics, model calibration~\cite{calibration} and Human-AI collaboration system~\cite{aucoc}. Both of them focus on the confidence of the model. Model calibration adjusts the overall confidence level of the model to align its confidence with uncertainty, which can be divided into two categories: In-process and post-hoc methods. The in-process methods involve specifically designed loss functions to optimize calibration objectives, such as Soft AvUC/ECE loss~\cite{softavucece}, and MMCE loss~\cite{mmce}. The post-hoc methods globally adjust the confidence of the model after training, such as temperature scaling~\cite{calibration}, which, however, often do not change the ranking of confidence among samples, and thus cannot be directly used for selective classification. The definition of Human-AI collaboration~\cite{aucoc} is similar to that selective classification, which uses model confidence to determine which samples are delegated to human experts. The main difference is that the goal of Human-AI collaboration is more global, that is, optimizing AUCOC (Area Under Confidence Operating Characteristics), and a loss function was proposed to directly improve the AUCOC.  

\subsection{Contrastive Learning}
Instead of modifying the classification layer, we focus on optimizing feature representation for selective classification, which is facilitated by contrastive learning. Here, we introduce some related works on contrastive learning in both unsupervised and supervised domains.

Contrastive learning is a learning paradigm that maximizes the similarity between related samples and minimizes the similarity between unrelated samples, and has been commonly used for unsupervised representation learning.~\citet{infoNce} introduced a widely used form of contrastive loss function known as InfoNCE, which encourages the model to learn useful features by comparing each positive sample with multiple negative samples.~\citet{moco} proposed Momentum Contrast (MoCo), which uses a dynamic dictionary and a momentum encoder to solve the problem of insufficient diversity of negative samples caused by limited batch sizes in end-to-end training methods~\cite{infoNce,e2e}, as well as the problem of inconsistent features caused by slow update of features in memory bank methods~\cite{memorybank}. Recently, contrastive learning has also been introduced to supervised learning, where the label information is utilized to guide the division of positive and negative samples, aiming to obtain better feature representation.~\citet{supcon} proposed using samples with the same label as positives and those with different labels as negatives, and extended the \mbox{InfoNCE} loss to scenarios with multiple positives per anchor. 


While contrastive learning has been widely acknowledged as an effective approach for learning feature representations, its application in selective classification remains less explored. In this work, we leverage the strengths of contrastive learning to improve the feature representation of the model, enabling the model to better distinguish between correctly classified and incorrectly classified samples.

\section{Selective Classification Problem}

Let $\mathcal{X}$ and $\mathcal{Y}$ denote the feature space and the label space, respectively. Let $\mathcal{D}$ be an unknown data distribution over $\mathcal{X} \times \mathcal{Y}$. Let $\mathcal{F}$ and $\mathcal{G}$ denote two families of functions mapping $\mathcal{X}$ to ${[0,1]}^k$ and $[0,1]$, respectively. Our goal is to learn a selective classification model $(f,g)\in \mathcal{F} \times \mathcal{G}$:
\begin{align}\label{fg}
(f, g)(\bm{x}, y)= \begin{cases}f(\bm{x}) & \text { if } g(\bm{x})\geq h; \\ \text { Abstain } & \text { if } g(\bm{x})<h.\end{cases}
\end{align}

Here, $f: \mathcal{X} \rightarrow {[0,1]}^k$ represents a conventional classifier that outputs a probability vector for $k$ classes, with the predictive class $\hat{y}$ determined by $\hat{y}=  \arg\max_{j} f_j(\bm{x})$, and $g: \mathcal{X} \rightarrow [0, 1]$ is a selective function that estimates the confidence of $f(\bm{x})$ (also known as the confidence function), serving as a binary qualifier for $f$. That is, the model only predicts when $g(\bm{x})$ exceeds a predetermined threshold $h$.

Evaluating the performance of a selective classifier often involves two metrics: \emph{coverage} and \emph{selective risk}~\cite{GeifmanE17:coverage}. Coverage relies only on the selective function $g$, which is defined as:
\begin{align*}
\phi(g) \triangleq \mathbb{E}_{(\bm{x}, y) \sim\mathcal{D}}\; I[g(\bm{x}) \geq h],
\end{align*}
where the indicator function $I[\cdot]$ is $1$ if the inner expression is true and $0$ otherwise. Selective risk is defined as:
\begin{equation}
\label{selective risk}
R(f, g) \triangleq \frac{\mathbb{E}_{(\bm{x}, y) \sim\mathcal{D}}\{L[f(\bm{x}), y] \cdot I[g(\bm{\bm{x}}) \geq h]\}}{\phi(g)},
\end{equation}
where $L$ is typically the $0/1$ loss for classification. Thus, coverage $\phi(g)$ measures the ratio of instances that are classified by the model, and selective risk $R(f, g)$ measures the loss of the model when making predictions. In this paper, we follow the common modeling of the selective classification problem~\cite{SelectiveNet,NIPS:SAT,Entropy+SR}, i.e., to minimize the selective risk within a given target coverage $c_\text{target}$:   
\begin{equation}
\label{constraint}
\min _{} R\left(f, g\right)  \;\;
\text { s.t. }\; \phi\left(g\right) \geq c_\text{target} .
\end{equation}

\section{Theoretical Analysis}






In this section, we analyze the generalization performance of a DNN-based selective model for selective classification.
For the conventional classifier $f$ of a selective model $(f,g)$, we denote its feature embedding layer as $c$, and the final classification layer as $l$, i.e., $f = l \circ c: \mathcal{X} \rightarrow {[0, 1]^k}$. For a sample $\bm{x}$ with its corresponding label $y$, we denote the output of the feature embedding layer $c$ as $c(\bm{x})$, i.e., the non-normalized feature embedding of $\bm{x}$. 

For analytical convenience, we add the coverage constraint in Eq.~\eqref{constraint} to the objective function (i.e., selective risk in Eq.~\eqref{selective risk}) as a penalty term, yielding the following selective classification loss:
\begin{align*}
    L_{0}(f, g, \bm{x}, y) \!= L[f(\bm{x}), y] \!\cdot \!I[g(\bm{x})\geq h]\!+ \! \lambda \! \cdot  \! I[g(\bm{x}) < h], 
\end{align*}
where we use the $0/1$ loss $L[f(\bm{x}), y]\!=\!I[\arg\max_j f_j(\bm{x}) \!\neq\! y]$, and $\lambda> 0$ is the penalty coefficient which regulates the trade-off between minimizing selective risk and achieving high coverage rates to satisfy the constraint. Thus, the learning problem requires utilizing a set of labeled samples $S=\left\{\left(\bm{x}_1, y_1\right), \ldots,\left(\bm{x}_m, y_m\right)\right\}$, which are assumed to be independently and identically distributed and drawn from the unknown data distribution $\mathcal{D}^m$, to determine a pair $(f, g) \in \mathcal{F} \times \mathcal{G}$ that achieves a small expected selective classification loss $\mathbb{E}_{(\bm{x}, y) \sim \mathcal{D}}[L_{0}(f, g, \bm{x}, y)]$.



Margin loss is commonly used to analyze the generalization error of models~\cite{foundations,lyu2019refined,lyu2022improving,mdep,dep}. Here, we employ the margin loss associated with Max Hinge~\cite{margintheory} 
\begin{align*}
L_{\mathrm{MH}}^{\rho, \rho^{\prime}}(f, g, \bm{x}, y)=\max\! \Big\{&\max \{1+\frac{\alpha}{2}(\frac{g(\bm{x})}{\rho^{\prime}} \!-\!\frac{\gamma(\bm{x})}{\rho}), 0\},\\ 
 &\max \{\lambda (1-\beta \frac{g(\bm{x})}{\rho^{\prime}}), 0\}\Big\},
\end{align*}
as a surrogate loss function for the selective classification loss $L_{0}(f, g, \bm{x}, y)$, where $\rho, \rho^{\prime}$ are two parameters associated with the minimum margins of $f$ and $g$, respectively, $\alpha,\beta>0$, and $\gamma(\bm{x})$ is the margin of sample $\bm{x}$ on the classification layer of $f$, i.e., $\gamma(\bm{x}) \triangleq f_y(\bm{x}) - \max_{j\neq y}f_j(\bm{x})$. The margin loss $L_{\mathrm{MH}}^{\rho, \rho^{\prime}}(f, g, \bm{x}, y)$ is actually a convex upper bound on $L_{0}(f, g, \bm{x}, y)$. The first term $\max \{1+\frac{\alpha}{2}(\frac{g(\bm{x})}{\rho^{\prime}} -\frac{\gamma(\bm{x})}{\rho}), 0\}$ of $L_{\mathrm{MH}}^{\rho, \rho^{\prime}}$ indicates that the samples chosen for classification but classified incorrectly should be either correctly classified with a $\rho$-margin or modified in confidence to be rejected with a $\rho^{\prime}$-margin (i.e., emphasizing that the model will not predict incorrectly when choosing to make predictions). The second term $\max \{\lambda (1-\beta \frac{g(\bm{x})}{\rho^{\prime}}), 0\}$ indicates that every sample subjected to rejection should have its confidence adjusted to be selected for classification with a $\rho^{\prime}$-margin (i.e., emphasizing the model's selection of as many samples as possible to predict).

For a feed-forward DNN $f = l \circ c$ with ReLU activation function and its associated selective function $g$, we prove in Theorem~\ref{thm:bound} that the selective classification generalization error of $(f,g)$ can be bounded by the empirical margin loss $\mathbb{E}_{(\bm{x}, y) \sim S}[L_{\mathrm{MH}}^{\rho, \rho^{\prime}}(f, g, \bm{x}, y)]$ at its classification layer and another term positively related with $\operatorname{Var}_{\mathrm{intra}}[c(\bm{x})]$. Note that $\operatorname{Var}_{\mathrm{intra}}[c(\bm{x})]=\operatorname{tr}[\sum^k_{i =1} \operatorname{Cov}[c^i(\bm{x})] / k]$ denotes the intra-class variance, where $\operatorname{Cov}[c^i(\bm{x})]$ denotes the covariance matrix of the feature embeddings of all samples with label $i$, $k$ is the number of classes, and $\operatorname{tr}$ denotes the trace of a matrix. That is, $\operatorname{Var}_{\mathrm{intra}}[c(\bm{x})]$ denotes the variance of feature representations for samples within the same class. 
\begin{theorem}
\label{thm:bound}
$\forall \rho, \rho^{\prime}, \alpha,\beta,\lambda>0$, and $\forall \delta>0$, with probability at least $1-\delta$ over a training set of size $m$, we have:
\begin{align*}
&\mathbb{E}_{(\bm{x}, y) \sim \mathcal{D}}[L_{0}(f, g, \bm{x}, y)] \leq \mathbb{E}_{(\bm{x}, y) \sim S}\left[L_{\mathrm{MH}}^{\rho, \rho^{\prime}}(f, g, \bm{x}, y)\right]\\
&\quad +4 \sqrt{\frac{\|l\|_2^2 \operatorname{Var}_{\mathrm{intra}}[c(\bm{x})]+4 \tilde{\rho}^2+\tilde{\rho}^2\|l\|_2^2 \ln \frac{6 m}{\delta}}{\tilde{\rho}^2 m\|l\|_2^2}},
\end{align*}
where $\|l\|_2$ denotes the L2-norm of the classification layer $l$'s parameters, and $\Tilde{\rho}=\min \{\rho/(4 \alpha), \rho^{\prime}/(4 \beta \lambda+2 \alpha)\}$. 
\end{theorem}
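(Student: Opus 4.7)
The plan is to use the standard margin-based generalization framework: first reduce the $0/1$-type loss $L_0$ to the surrogate $L_{\mathrm{MH}}^{\rho,\rho'}$, then control the uniform deviation of the margin loss via Rademacher complexity, and finally exploit the linear structure of the classification layer $l$ to recast that complexity in terms of $\operatorname{Var}_{\mathrm{intra}}[c(\bm{x})]$. The first step is a pointwise verification that $L_0(f,g,\bm{x},y)\le L_{\mathrm{MH}}^{\rho,\rho'}(f,g,\bm{x},y)$ via case analysis on whether $g(\bm{x}) \ge h$: when $g(\bm{x})\ge h$ and the prediction is wrong, the first inner maximum is at least $1$; when $g(\bm{x}) < h$, the second inner maximum is at least $\lambda$. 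This yields $\mathbb{E}_{\mathcal{D}}[L_0]\le \mathbb{E}_{\mathcal{D}}[L_{\mathrm{MH}}^{\rho,\rho'}]$, so it suffices to control the uniform deviation of the surrogate.

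Next I would apply a standard symmetrization-plus-McDiarmid argument to obtain, with probability at least $1-\delta$, that $\mathbb{E}_{\mathcal{D}}[L_{\mathrm{MH}}^{\rho,\rho'}] \le \mathbb{E}_S[L_{\mathrm{MH}}^{\rho,\rho'}] + 2\mathfrak{R}_m(\mathcal{L}) + O(\sqrt{\ln(1/\delta)/m})$, where $\mathcal{L}$ is the induced loss class on $(f,g)$. Since $L_{\mathrm{MH}}$ is a maximum of two piecewise-linear functions in $(\gamma(\bm{x}), g(\bm{x}))$, its joint Lipschitz constant in those arguments is the larger of $(\alpha/2)\sqrt{1/\rho^2+1/{\rho'}^2}$ and $\beta\lambda/\rho'$; combined with the $2$-Lipschitz dependence of $\gamma(\bm{x})=f_y(\bm{x})-\max_{j\neq y} f_j(\bm{x})$ on $f$ and the standard multiplicative constants from Talagrand's contraction lemma, this produces exactly the $\tilde{\rho}^{-1}$ factor stated in the theorem. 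After contraction, $\mathfrak{R}_m(\mathcal{L}) \le \tilde{\rho}^{-1}\bigl(\mathfrak{R}_m(\mathcal{F})+\mathfrak{R}_m(\mathcal{G})\bigr)$ up to constants.

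The substantive step is bounding $\mathfrak{R}_m(\mathcal{F})$ with $f=l\circ c$ in terms of $\|l\|_2$ and the intra-class variance. A direct Cauchy--Schwarz argument on the linear layer yields $\mathfrak{R}_m(\mathcal{F}) \le (\|l\|_2/m)\,\mathbb{E}_{S,\sigma}\|\sum_i \sigma_i c(\bm{x}_i)\|_2$. To recover the intra-class variance rather than the raw second moment, I would decompose $c(\bm{x}_i) = (c(\bm{x}_i)-\mu_{y_i}) + \mu_{y_i}$ using the \emph{population} class means $\mu_k = \mathbb{E}[c(\bm{x})\mid y=k]$ and handle the two pieces separately: the within-class residual satisfies $\mathbb{E}_\sigma\|\sum_i\sigma_i(c(\bm{x}_i)-\mu_{y_i})\|^2 = \sum_i\|c(\bm{x}_i)-\mu_{y_i}\|^2$, whose expectation over $S$ equals exactly $m\cdot\operatorname{Var}_{\mathrm{intra}}[c(\bm{x})]$ via the trace formula; the prototype piece $\sum_i\sigma_i\mu_{y_i}$ has mean-zero Rademacher components and concentrates at rate $O(\sqrt{m})$, contributing only the lower-order correction $4\tilde{\rho}^2/(m\|l\|_2^2)$ inside the square root. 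Combining with Jensen's inequality and the earlier McDiarmid tail $\ln(6m/\delta)/m$ (the constant $6$ absorbing a union bound at level $\delta/3$ over three concentration events) yields the stated square-root expression.

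The main obstacle is the prototype separation: the empirical class means are themselves data-dependent and correlated with the features, so the clean orthogonality identity $\mathbb{E}_\sigma\|\sum_i\sigma_i v_i\|^2 = \sum_i\|v_i\|^2$ cannot be applied naively with sample centroids. Centering instead by the deterministic population means $\mu_k$ keeps the identity valid but leaves a residual prototype term $\sum_i \sigma_i \mu_{y_i}$ that must be bounded uniformly in $(f,g)$ via an independent concentration argument (feasible because $\mu_k$ is deterministic given the i.i.d.\ labels). Tracking how this residual propagates through Jensen's inequality---without inflating the leading $\operatorname{Var}_{\mathrm{intra}}$ term---is the delicate bookkeeping, and it is precisely this step that yields the additive $4\tilde{\rho}^2$ in the numerator and the $6m$ inside the logarithm.
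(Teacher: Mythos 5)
Your route is genuinely different from the paper's, and it has a gap at its central step. The paper does not use Rademacher complexity at all: it proves a PAC-Bayesian lemma (in the style of Neyshabur et al.) relating $\mathbb{E}_{\mathcal{D}}[L_0]$ to the empirical margin loss plus $4\sqrt{(D_{\mathrm{KL}}(\bm{w}+\bm{u}\,\|\,P)+\ln\frac{6m}{\delta})/(m-1)}$, then adds Gaussian noise $\bm{u}\sim\mathcal{N}(0,\sigma^2\bm{I})$ \emph{only to the classification layer} $l$. The second moment of the resulting output perturbation is $\sigma^2\,\mathbb{E}[\|c(\bm{x})\|_2^2]$, which the law of total covariance splits into $\operatorname{Var}_{\mathrm{intra}}[c(\bm{x})]$ plus a between-class-mean term bounded by $4\tilde{\rho}^2/\|l\|_2^2$; Markov's inequality then certifies the perturbation condition with probability $1/2$, $\sigma$ is solved for so that the perturbation equals $\tilde{\rho}$, and the KL term $\le 1/(2\sigma^2)$ delivers the numerator of the bound. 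The single square root enclosing both the variance term and $\ln\frac{6m}{\delta}$, the leading constant $4$, and the $6m$ are all artifacts of that PAC-Bayes lemma, not of a union bound over three McDiarmid events as you conjecture.

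The concrete gap in your approach is that the Cauchy--Schwarz step $\mathfrak{R}_m(\mathcal{F})\le(\|l\|_2/m)\,\mathbb{E}_{S,\sigma}\|\sum_i\sigma_i c(\bm{x}_i)\|_2$ is the standard bound for a linear class over a \emph{fixed} feature map, but here $c$ is the learned embedding layer of the DNN and is itself a function of $S$. A uniform-convergence argument over the full class $\mathcal{F}=\{l\circ c\}$ would have to pay for the complexity of all feature layers (covering numbers or norm products over every layer), and could not produce a bound that depends only on $\|l\|_2$ and the population intra-class variance of the \emph{particular} learned $c$; conversely, freezing $c$ makes the argument inapplicable to the trained network. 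The same circularity affects your claim that $\mathbb{E}_S[\sum_i\|c(\bm{x}_i)-\mu_{y_i}\|_2^2]=m\cdot\operatorname{Var}_{\mathrm{intra}}[c(\bm{x})]$, which requires $c$ to be independent of $S$, and your prototype term $\sum_i\sigma_i\mu_{y_i}$, whose size is governed by $\max_k\|\mu_k\|_2$ rather than by the margin-controlled pairwise differences $\|\mu_i-\mu_j\|_2\le 2\tilde{\rho}/\|l\|_2$ that actually produce the $4\tilde{\rho}^2$ in the paper. The PAC-Bayes route sidesteps all of this because the bound holds simultaneously for every posterior center $\bm{w}$, hence for the data-dependent one, and only the last layer is ever perturbed. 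Your decomposition of the feature covariance into within-class and between-class pieces is the right instinct --- it is exactly the paper's law-of-total-covariance step --- but it needs to be deployed inside a PAC-Bayes (or stability/compression) framework rather than a Rademacher one to yield the stated theorem.
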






\mbox{Theorem~\ref{thm:bound}} discloses that the generalization performance of a selective classification model is associated not only with the empirical margin loss at the classification layer but also with the feature representation at the feature layer. Specifically, a smaller intra-class variance of feature representation will enhance the generalization performance of selective classification. The proof is accomplished by bounding the Kullback–Leibler divergence term in a PAC-Bayesian lemma with the variance of perturbation, which is related to the intra-class variance $\operatorname{Var}_{\mathrm{intra}}[c(\vx)]$ of feature representation and the margin parameter $\tilde{\rho}$, and the proof details are provided in Appendix~\ref{proofs-sec} due to space limitation.

\section{CCL-SC Method}\label{method}

For selective classification, previous works focus on modifying the classification layer of a model to enable the model to better estimate its prediction confidence~\cite{SelectiveNet,DeepGambler,NIPS:SAT,Entropy+SR}. Inspired by Theorem~\ref{thm:bound}, we improve the model's selective classification performance from a new perspective, that is, we optimize the feature representation of the model to aggregate the feature representations of samples in the same category, which aligns naturally with the paradigm of contrastive learning. For the loss function of the classification layer, we utilize the cross-entropy loss, which is a smooth relaxation of the margin loss and is easier to optimize for DNNs~\cite{smooth}. SR~\cite{SR,GeifmanE17:coverage}, i.e., the maximum predictive class score $\max_j f_{j}(\bm{x})$, is used as the selective function $g$ in our method.




Contrastive learning has been used in supervised learning~\cite{supcon}, which simply defines positive and negative samples as those with the same label and different labels, respectively, and optimizes the feature representation by pulling positive samples closer than negative samples. However, previous contrastive learning methods do not consider the correctness and confidence of the prediction for the samples~\cite{memorybank,moco,simclr,supcon,CDS}, which cannot directly meet the requirement of selective classification: Correctly classified samples should be separated from misclassified samples, and the model's confidence function should reflect the reliability of its classifications, i.e., samples with higher confidence are more likely to be classified correctly. To address this issue, we redefine the positive and negative samples according to the predicted results of the current model: a sample is positive/negative if the prediction matches the anchor’s label and is correct/incorrect. Then, we design a new contrastive loss function to separate the feature representations of correctly classified and misclassified samples by making the anchor's features more similar to its positive samples and less similar to its negative samples; and to pay more attention to samples with higher confidence during training by weighting the loss with the model's SR.

\begin{figure*}[htbp]  \centering  \includegraphics[width=0.8\textwidth]{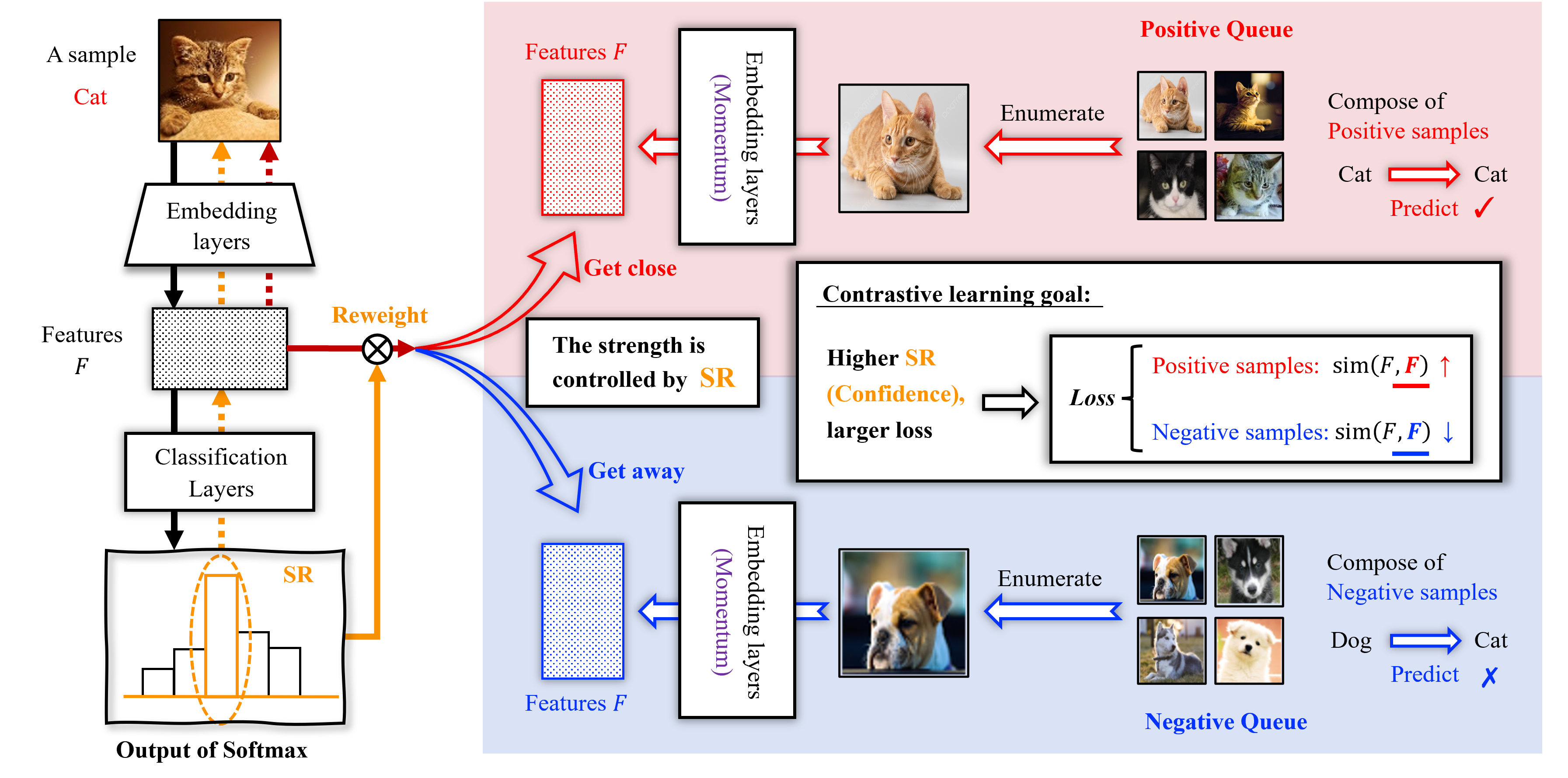}\vspace{-0em}
\caption{Illustration of the proposed CCL-SC method. The right part outlines our definition of positive/negative samples: a sample is positive/negative if the prediction matches the anchor's label and is correct/incorrect. Two independent queues store positive and negative samples, respectively. The middle part displays the characteristic of the proposed CSC loss: prompting the model to separate correctly classified and misclassified samples at the feature level and focus on samples with high prediction confidence. The black arrow on the left represents forward calculation, while the yellow and red ones represent backpropagation of the cross-entropy and CSC loss, respectively.}   \label{FIG:method} \vspace{-0em}\end{figure*}

Figure~\ref{FIG:method} briefly illustrates our proposed method CCL-SC. In the following, we will provide detailed introduction to its key components, i.e.,
\begin{itemize}
    \item How to define and construct positive/negative samples?
    \item How to design a loss function to incentivize the model to learn features conducive to selective classification, and to make it sensitive to the model's SR?
    \item How to use the proposed loss to train the model?
    \\
\end{itemize}

\subsection{Constructing Positive and Negative Samples}\label{sec-samples}

To enhance the model's ability to discern the correctness of predictions, we define positive and negative samples based on whether the model predicts the sample correctly. A positive sample $\bm{x}_p$ for the anchor $\bm{x}$ with label $y$ is defined as a sample that is predicted to belong to the class $y$ and is correctly classified. A negative sample $\bm{x}_n$ is defined as a sample that is incorrectly predicted to belong to the class $y$. In other words, although the true label of the negative sample is different from $y$, the model incorrectly classifies it into the same class. The right part of Figure~\ref{FIG:method} illustrates the definition of positive and negative samples.

Due to the dependency on the model predictions, positive and negative samples have to be sampled from batches. However, sampling solely from the current mini-batch is often insufficient, particularly when dealing with a large number of classes, such as the ImageNet dataset with 1000 classes. To address this issue, we adopt the approach used in MoCo~\cite{moco}, introducing queues as dictionaries to reuse samples from different batches, while utilizing a momentum encoder (denoted as ${f_{\bm{\theta}}}_{\mathrm{m}}$) to generate sample features. The parameters $\bm{\theta}_\text{m}$ of the momentum encoder are updated based on the original encoder parameters $\bm{\theta}$:\vspace{-0.2em}
\begin{equation}
\label{update}
\bm{\theta}_{\mathrm{m}} \leftarrow q\cdot  \bm{\theta}_{\mathrm{m}}+(1-q)\cdot  \bm{\theta},\vspace{-0.2em}
\end{equation}
where $q \in [0,1)$ is the momentum coefficient to control the magnitude of updates. But unlike MoCo, our method maintains two separate queues $P$ and $Q$, to store positive and negative samples, respectively. Each element in the queue is a tuple composed of the normalized feature and the predicted class of a sample. We denote $P(y)\subseteq P$ and $N(y)\subseteq N$ as the positive samples and negative samples for an anchor with label $y$. For simplicity, we use queues of the same size $s$ to store positive and negative samples.\vspace{-0.3em}

\subsection{Confidence-aware Supervised Contrastive Loss}

Let $\bm{z}_i = c(\bm{x}_i) / \lVert c(\bm{x}_i) \rVert$ denote the normalized feature embedding of sample $\bm{x}_i$ through the embedding layer $c$. Then, $\bm{z}_i \cdot \bm{z}_j$ is just the cosine similarity $c(\bm{x}_i) \cdot c(\bm{x}_j)/(\|c(\bm{x}_i)\|\cdot\|c(\bm{x}_j)\|)$ between the feature representations of samples $\bm{x}_i$ and $\bm{x}_j$. We introduce a new loss function called Confidence-aware Supervised Contrastive (CSC) loss, by combining a contrastive learning loss in the form of infoNCE~\cite{infoNce} with the predictive confidence SR of the model. Given an anchor $\bm{x}$ with label $y$, its CSC loss is defined as \vspace{-0.1em}
\begin{equation}
\begin{aligned}
L_{\text {CSC}} =  \frac{ \max_j f_{j}(\bm{x})}{-|P(y)|} \! \! \sum_{\bm{x}_p \in P(y)}\! \! \log \frac{\exp \left(\bm{z} \cdot \bm{z}_p / \tau\right)}{ \sum_{\bm{x}_a \in A(y)} \exp \left(\bm{z} \cdot \bm{z}_a / \tau\right)},\nonumber
\end{aligned}\vspace{-0.1em}
\end{equation}
where $\max_j \!f_{j}(\bm{x})$ is SR, serving as a weight coefficient to dynamically adjust the magnitude of $L_{\text {CSC }}$, $A(y)\! \!= \!\!N(y) \! \cup \! \{\bm{x}_p\}$ denotes the set of all negative samples of the anchor $\bm{x}$ and the current positive sample $\bm{x}_p$, and $\tau$ is a temperature hyper-parameter controlling the emphasis of $L_{\text {CSC }}$ on difficult samples. We set $\tau$ to a commonly used value of 0.1. $L_{\text {CSC }}$ compels the anchor to be closer to the positive samples and farther from the negative samples, thereby achieving the goal of distinguishing between correctly classified and misclassified samples; and focusing on the samples with high confidence by combining SR.
 


\textbf{Enhanced feature learning for selective classification.} Unlike previous contrastive learning methods, the CSC loss utilizes both the label and prediction information of samples, providing strong supervision for each anchor. Through the contrastive learning of multiple positive and negative samples, the CSC loss encourages the model to learn features that can better distinguish between correct and incorrect predictions, enabling the model to learn more robust embedding spaces. Note that to ensure a fair comparison with previous selective classification methods, we do not acquire positive samples through additional data augmentation techniques, which was common in contrastive learning.

\textbf{Mining samples with high confidence but poor features.} 
The gradient of $L_{\text{CSC}}$ with respect to the normalized feature embedding $\bm{z}$ of the anchor $\bm{x}$ can be calculated as
$$
\frac{\max_j \!f_{j}(\bm{x})}{-\tau}\!\Bigg(\sum_{\bm{x}_p \in P(y)}\!\!\Big(\frac{1}{|P(y)|}\!-\!X_{\bm{z},\bm{z}_p}\!\Big) \bm{z}_p\!-\!\!\!\!\sum_{\bm{x}_n \in N(y)}\!\!\!\! \!X_{\bm{z},\bm{z}_n} \bm{z}_n\! \!\Bigg),
$$
where $X_{\bm{z}, \bm{z}_j}\!\!=\!\exp(\bm{z} \cdot \bm{z}_j / \tau)/\!\sum_{\bm{x}_a \in N(y) \cup\{\bm{x}_j\}} \!\exp (\bm{z} \cdot \bm{z}_a / \tau)$. Thus, the higher the value of the confidence SR (i.e., $\max_j \!f_{j}(\bm{x})$) and the poorer the feature representation (i.e., dissimilar from positive samples and similar to negative samples), the larger the gradient scale of the CSC loss for $\bm{z}$. By modulating the loss with the model's SR, the CSC loss directly prioritizes learning from instances where the model is more certain, since the loss of the samples with low prediction confidence is small. Thus, the features of samples with low prediction confidence will be distinguished from those of high-confidence samples, improving the discriminability of samples with different confidence levels. Similar to the supervised contrastive loss in~\cite{supcon}, the CSC loss has the ability to mine difficult samples, that is, the model will pay more attention to samples with high prediction confidence but poor feature representation. 


\subsection{Training with CSC Loss}

We now introduce the training method with the proposed CSC loss. Different from the conventional two-stage training manner ``pre-train then finetune'' used in typical contrastive learning methods~\cite{moco,simclr,supcon}, we employ a one-stage manner to optimize the classification layers and convolutional layers of the model together, because the CSC loss involves the predictive information of the model. 

The model $f$ is initially trained using the cross-entropy loss $L_\text{CE}$ for $E_s$ epochs. When $e \geq E_s$ and meanwhile both queues $P$ and $Q$ have been updated more than $s$ tuples, the training gradient consists of two parts: the gradient of the CSC loss $L_\text{CSC}$ returned from the last feature embedding layer $c$, and the gradient of the cross-entropy loss $L_\text{CE}$ returned from the classification layer $l$. We use a weight coefficient $w$ to balance these two loss items. The parameters $\bm{\theta}$ of the model $f$ will be updated using an optimizer based on the combined loss. As introduced in Section~\ref{sec-samples}, when $e \geq E_s$, a momentum encoder ${f_{\bm{\theta}}}_{\mathrm{m}}$ is used to generate positive and negative samples. When the epoch $e$ equals $E_s$, the parameters $\bm{\theta}_\mathrm{m}$ of the momentum encoder ${f_{\bm{\theta}}}_{\mathrm{m}}$ are initialized to the parameters $\bm{\theta}$ of the current model $f$, which ensures that the momentum encoder has a favorable initial accuracy. After that, the parameters $\bm{\theta}_\mathrm{m}$ of the momentum encoder will be updated according to Eq.~\eqref{update} at each training step. The samples generated by the momentum encoder will be used to update the queues $P$ and $Q$, which store positive and negative samples, respectively. Note that ${f_{\bm{\theta}}}_{\mathrm{m}}$ will not be optimized by the optimizer. The pseudo-code of the training method is shown in Algorithm~\ref{alg:algorithm} in Appendix~\ref{train-sec}. 

\section{Experiments}

In this section, we will give the experimental settings and results. Due to space limitation, some details are shown in Appendix~\ref{setting sec} to~\ref{improve-sec}. The codes are provided in \url{https://github.com/lamda-bbo/CCL-SC}.

\textbf{Datasets} We conduct experiments on four commonly used datasets, i.e., CelebA~\cite{celeba}, CIFAR-10/CIFAR-100~\cite{cifar}, and ImageNet~\cite{imagenet}. CelebA is a large-scale face attributes dataset, consisting of over 200,000 celebrity images, and the challenging label `attractive' is used as the target for binary classification. CIFAR-10 and CIFAR-100 are two datasets containing images across 10 and 100 categories, with 5,000 and 500 images per category, respectively. ImageNet contains 1000 categories of images, with 1300 images per category. The experiments on CIFAR-10, CIFAR-100, and CelebA are run with 5 seeds, and those on ImageNet are run with 3 seeds.

\textbf{Baselines} We compare our method against SOTA selective classification methods, including SAT~\cite{NIPS:SAT,SAT}, SAT with Entropy-Minimization regularization (SAT+EM)~\cite{Entropy+SR}, and DG~\cite{DeepGambler}. Based on~\cite{Entropy+SR}, we include the results of using SR as the confidence function for these methods as well. We also compare with a common baseline denoted as SR~\cite{SR,GeifmanE17:coverage}, which is a vanilla classifier trained via the cross-entropy loss and uses SR as the confidence function. We do not compare with SN~\cite{SelectiveNet} since previous methods such as SAT+EM have already demonstrated their superiority over SN. Additionally, SN requires retraining for different coverage rates, resulting in significantly higher training costs compared to other methods.
\begin{table*}[t!]
\caption{Selective risk (\%) on CIFAR-10 for various coverage rates (\%). The mean and standard deviation are calculated over 5 trials. The best entries are marked in bold. The symbol `$\bullet$'/`$\circ$' indicates that CCL-SC is significantly better/worse than the corresponding method, according to the Wilcoxon rank-sum test with significance level 0.05.}\vskip 0.05in
\small
\fontsize{9}{10}\selectfont
\centering
    \begin{tabular}{ccccccccc}
    \toprule 
    
Coverage & CCL-SC & SAT+EM+SR & SAT+EM & SAT+SR & SAT   & DG+SR & DG    & SR \\
\midrule
100   & \textbf{5.97±0.11} & 6.14±0.07 & 6.14±0.07 & 6.16±0.13• & 6.16±0.13• & 6.34±0.16• & 6.34±0.16• & 6.25±0.07\  \\
    99    & \textbf{5.32±0.05} & 5.61±0.06• & 5.58±0.06• & 5.63±0.09• & 5.63±0.11• & 5.77±0.16• & 5.75±0.18• & 5.69±0.07• \\
    98    & \textbf{4.87±0.04} & 5.11±0.07• & 5.14±0.03• & 5.16±0.07• & 5.14±0.13• & 5.27±0.19• & 5.23±0.19• & 5.16±0.06 \\
    97    & \textbf{4.41±0.07} & 4.66±0.09• & 4.69±0.06• & 4.67±0.05• & 4.74±0.11• & 4.82±0.16• & 4.70±0.20 & 4.67±0.04• \\
    95    & \textbf{3.56±0.06} & 3.85±0.09• & 3.93±0.06• & 3.87±0.08• & 3.97±0.11• & 3.98±0.13• & 3.83±0.13• & 3.81±0.07• \\
    90    & \textbf{2.01±0.07} & 2.20±0.07• & 2.34±0.09• & 2.26±0.12• & 2.35±0.16• & 2.37±0.13• & 2.26±0.10• & 2.19±0.12• \\
    85    & \textbf{1.10±0.06} & 1.23±0.07• & 1.31±0.07• & 1.27±0.09• & 1.36±0.10• & 1.39±0.09• & 1.32±0.11• & 1.25±0.13• \\
    80    & 0.69±0.08 & \textbf{0.67±0.07} & 0.71±0.07 & 0.71±0.09 & 0.74±0.09 & 0.86±0.08• & 0.72±0.10 & 0.68±0.08 \\\bottomrule
    \end{tabular}\vspace{-0.5em}
\label{result-CIFAR-10}

\end{table*}

\textbf{Hyper-parameters} For each dataset, we utilize 20\% of the training set as the validation set to tune hyper-parameters. We test the momentum coefficient $q\in \{0.9, 0.999, 0.999\}$, and the weight coefficient $w\in \{0.1, 0.5, 1.0\}$. For the queue size $s$, we set it based on the number of classes in the dataset: for datasets with fewer classes such as CelebA and CIFAR-10, $s=300$; for datasets with more classes such as CIFAR-100 and ImageNet, we set $s=3000$ and $s=10000$, respectively. We train the model on the entire training set to evaluate performance. Detailed hyperparameter settings for each method on each dataset are provided in Appendix~\ref{param sec}.

\textbf{Networks and Training} Following prior work, for CIFAR-10 and CIFAR-100, we employ VGG16~\cite{vgg} as the backbones of selective classifiers. ResNet34 and ResNet18~\cite{resnet} are utilized for ImageNet and CelebA, respectively. For the same dataset, all compared methods utilize the same data augmentation and training parameters to ensure a fair comparison. The detailed settings of training parameters and data augmentation are provided in Appendix~\ref{network sec} and~\ref{Data Augmentation sec}, respectively.

\subsection{Comparison with State-of-the-art Methods}

Table~\ref{result-CIFAR-10} shows the results on CIFAR-10, which has a low classification difficulty but is most widely used in selective classification. It can be observed that the selective risk of all methods on CIFAR-10 is below 1\% at coverage 80\%. We do not include the results for lower coverage as the selective risk approaches almost zero thereafter. Our method CCL-SC achieves the lowest selective risk when coverage is at least 85\%. However, we can also see that the discrimination of different methods on CIFAR-10 is relatively low, which is consistent with previous works~\cite{NIPS:SAT,SAT,Entropy+SR}, mainly due to the insufficient number of misclassified samples. As a result, we focus on the other three datasets that are not saturated with accuracy.

Table~\ref{result-celeba&cifar100} shows the selective risk of different methods at various coverage on CelebA and CIFAR-100. For CelebA, CCL-SC performs the best across all degrees of coverage. Compared to any other method, CCL-SC performs significantly better on at least $7/13$ coverage rates, according to the Wilcoxon rank-sum test~\cite{rank-sum} with significance level 0.05. For CIFAR-100, CCL-SC achieves the lowest selective risk when coverage is at least 50\% or equal to 10\%, and is only worse than SAT-related methods in the range of 20\% to 40\%. According to the significance test, CCL-SC is significantly better than SAT-related methods when the coverage is at least 60\%, and only significantly worse than SAT+SR at 30\% coverage. 

We can also observe from Table~\ref{result-celeba&cifar100} that those methods (i.e., SAT+EM, SAT, DG) based on the additional selection head will generally become better if using SR as the confidence function, as observed in~\cite{Entropy+SR}. Furthermore, the improvement is larger on CIFAR-100 than on CelebA with only two classes, which could be attributed to the increased difficulty of learning additional logits as the number of classes increases. Note that the relative rankings of SAT+EM, SAT, and DG in our experiments are also consistent with the overall rankings reported in previous works~\cite{NIPS:SAT,SAT,Entropy+SR}.

\begin{table*}[t!]
\caption{Selective risk (\%) on CelebA and CIFAR-100 for various coverage rates (\%). The mean and standard deviation are calculated over 5 trials. The best entries are marked in bold. The symbol `$\bullet$'/`$\circ$' indicates that CCL-SC is significantly better/worse than the corresponding method, according to the Wilcoxon rank-sum test with significance level 0.05. The w/t/l denotes the number of cases where the selective risk of CCL-SC is significantly lower, almost equivalent, or significantly higher, compared to the corresponding method.}\vskip 0.05in
\small
\fontsize{8}{10}\selectfont
\centering
    \begin{tabular}{ccccccccc}
    \toprule 
    \multicolumn{9}{c}{CelebA} \\\midrule
Coverage & CCL-SC & SAT+EM+SR & SAT+EM & SAT+SR & SAT   & DG+SR & DG    & SR \\
\midrule
100   & \textbf{18.71±0.16} & 19.04±0.30 & 19.04±0.30 & 19.20±0.37• & 19.20±0.37• & 19.26±0.23• & 19.26±0.23• & 19.30±0.13• \\
    95    & \textbf{17.00±0.09} & 17.55±0.33• & 17.74±0.31• & 17.65±0.33• & 17.82±0.32• & 17.82±0.26• & 18.06±0.29• & 17.80±0.12• \\
    90    & \textbf{15.47±0.14} & 16.06±0.36• & 16.24±0.31• & 16.17±0.31• & 16.38±0.30• & 16.27±0.32• & 16.87±0.28• & 16.30±0.12• \\
    85    & \textbf{14.03±0.17} & 14.56±0.36• & 14.72±0.28• & 14.65±0.27• & 14.92±0.21• & 14.81±0.27• & 15.61±0.27• & 14.81±0.11• \\
    80    & \textbf{12.51±0.21} & 13.12±0.39• & 13.21±0.29• & 13.17±0.29• & 13.38±0.21• & 13.43±0.34• & 14.36±0.30• & 13.37±0.13• \\
    75    & \textbf{11.05±0.20} & 11.69±0.37• & 11.72±0.36• & 11.78±0.26• & 11.90±0.19• & 12.11±0.32• & 13.01±0.33• & 11.98±0.15• \\
    70    & \textbf{9.73±0.14} & 10.33±0.32• & 10.35±0.28• & 10.41±0.27• & 10.49±0.23• & 10.81±0.37• & 11.73±0.40• & 10.62±0.14• \\
    60    & \textbf{7.16±0.09} & \ \ 7.84±0.43•& \ \ 7.73±0.37• & \ \ 8.00±0.35• & \ \ 7.94±0.28• & \ \ 8.28±0.43• & \ \ 8.99±0.43• & \ \ 8.06±0.29• \\
    50    & \textbf{4.93±0.16} & \ \ 5.51±0.44• & 5.41±0.42 & \ \ 5.71±0.25• & \ \ 5.68±0.18• & \ \ 6.17±0.58• & \ \ 6.27±0.37• & \ \ 5.92±0.23• \\
    40    & \textbf{3.09±0.14} & 3.66±0.54 & 3.50±0.49 & \ \ 3.82±0.22• & \ \ 3.77±0.17• & \ \ 4.35±0.67• & \ \ 3.86±0.28• & \ \ 4.03±0.24• \\
    30    & \textbf{1.87±0.15} & 2.16±0.45 & 2.06±0.35 & \ \ 2.33±0.16• & 2.15±0.21 & 2.81±0.68 & 2.16±0.26 & \ \ 2.49±0.24• \\
    20    & \textbf{0.92±0.12} & 1.10±0.25 & 1.02±0.21 & \ \ 1.21±0.14• & 1.12±0.13 & \ \ 1.65±0.37• & 1.10±0.12 & \ \ 1.45±0.12• \\
    10    & \textbf{0.25±0.08} & 0.41±0.12 & 0.37±0.12 & \ \ 0.54±0.12• & \ \ 0.49±0.06• & \ \ 0.80±0.21• & 0.41±0.14 & \ \ 0.50±0.12• \\
    \midrule
    w/t/l & \multicolumn{1}{c}{/} & \multicolumn{1}{c}{8/5/0} & \multicolumn{1}{c}{7/6/0} & \multicolumn{1}{c}{14/0/0} & \multicolumn{1}{c}{12/2/0} & \multicolumn{1}{c}{13/1/0} & \multicolumn{1}{c}{11/3/0} & \multicolumn{1}{c}{14/0/0} \\
    \midrule
    Avg. Rank & \textbf{1.00 } & 2.54  & 2.77  & 4.46  & 5.00  & 6.85  & 6.62  & 6.15  \\
    \midrule
    \multicolumn{9}{c}{CIFAR-100} \\\midrule
    Coverage & CCL-SC & SAT+EM+SR & SAT+EM & SAT+SR & SAT   & DG+SR & DG    & SR \\
\midrule
100   & \textbf{26.55±0.26} & 26.96±0.14• & 26.96±0.14• & 26.98±0.16• & 26.98±0.16• & 27.12±0.30• & 27.12±0.30• & 27.19±0.33• \\
    95    & \textbf{23.54±0.15} & 24.14±0.12• & 24.16±0.10• & 24.17±0.18• & 24.25±0.14• & 24.28±0.26• & 24.35±0.38• & 24.37±0.29• \\
    90    & \textbf{20.97±0.20} & 21.52±0.22• & 21.56±0.10• & 21.59±0.15• & 21.68±0.16• & 21.72±0.29• & 21.84±0.40• & 21.86±0.31• \\
    85    & \textbf{18.57±0.20} & 19.08±0.21• & 19.14±0.18• & 19.09±0.22• & 19.18±0.19• & 19.35±0.16• & 19.43±0.32• & 19.43±0.40• \\
    80    & \textbf{16.07±0.15} & 16.71±0.18• & 16.73±0.23• & 16.64±0.19• & 16.84±0.23• & 16.89±0.19• & 17.20±0.37• & 17.09±0.45• \\
    75    & \textbf{13.60±0.19} & 14.30±0.19• & 14.53±0.28• & 14.21±0.23• & 14.49±0.22• & 14.44±0.29• & 14.97±0.48• & 14.58±0.39• \\
    70    & \textbf{11.23±0.16} & 11.94±0.21• & 12.07±0.20• & 11.83±0.18• & 12.11±0.20• & 12.05±0.42• & 12.81±0.64• & 12.28±0.29• \\
    60    & \textbf{6.83±0.15} & \ \ 7.51±0.16• & \ \ 7.83±0.07• & \ \ 7.54±0.09• & \ \ 7.79±0.22• & \ \ 7.75±0.61• & \ \ 8.91±0.75• & \ \ 7.71±0.33• \\
    50    & \textbf{3.95±0.22} & 4.08±0.15 & \ \ 4.30±0.15• & 4.10±0.24 & 4.32±0.19 & 4.40±0.56 & \ \ 5.48±0.72• & \ \ 4.36±0.17• \\
    40    & 2.29±0.33 & 2.12±0.15 & 2.37±0.20 & \textbf{2.00±0.04} & 2.38±0.10 & 2.45±0.50 & \ \ 3.16±0.49• & 2.18±0.13 \\
    30    & 1.26±0.17 & 1.05±0.11 & 1.37±0.16 & \ \ \textbf{0.96±0.10}$\circ$ & 1.21±0.10 & 1.68±0.59 & 2.07±0.57 & 1.29±0.11 \\
    20    & 0.71±0.12 & \textbf{0.54±0.15} & 0.77±0.08 & \textbf{0.54±0.14} & 0.67±0.12 & 1.14±0.50 & \ \ 1.65±0.49• & 0.78±0.11 \\
    10    & \textbf{0.36±0.08} & 0.48±0.12 & 0.58±0.21 & 0.42±0.16 & 0.48±0.23 & \ \ 0.74±0.29• & \ \ 1.28±0.30• & 0.58±0.21 \\
    \midrule
    w/t/l & /     & 8/5/0 & 9/4/0 & 8/4/1 & 8/5/0 & 9/4/0 & 12/0/0/ & 9/4/0 \\
    \midrule
    Avg. Rank & \textbf{1.69 } & 2.23  & 4.54  & 2.46  & 4.69  & 6.00  & 7.62  & 6.23  \\
    \bottomrule
    \end{tabular}\vspace{-0em}
\label{result-celeba&cifar100}\vspace{-0em}
\end{table*}



\begin{table}[t!]
\caption{Selective risk (\%) on ImageNet for various coverages~(\%). The mean and standard deviation are calculated over 3 trials. The best and runner-up entries are bolded and underlined, respectively.}\vskip 0.05in
\fontsize{8}{10}\selectfont
\centering
\begin{tabular}{ccccc}
\toprule 
Cov. & CCL-SC & SAT+EM+SR & SAT & CCL-SC*\\
    \midrule
100   & \textbf{26.26±0.10} & \underline{27.27 ± 0.05} & 27.41 ± 0.08 & 27.31±0.04 \\
    90    & \textbf{20.68±0.07} & \underline{21.57 ± 0.19} & 22.67 ± 0.24  & 21.71±0.03 \\
    80    & \textbf{15.76±0.07} & 16.83 ± 0.06 & 18.14 ± 0.28 & \underline{16.78±0.03} \\
    70    & \textbf{11.39±0.10} & 12.34 ± 0.11 & 13.88 ± 0.14 & \underline{12.25±0.04} \\
    60    & \textbf{7.55±0.09} & 8.45 ± 0.05 & 10.11 ± 0.15 & \underline{8.34±0.03} \\
    50    & \textbf{4.79±0.04} & 5.57 ± 0.17 & 6.82 ± 0.07 & \underline{5.33±0.06} \\
    40    & \textbf{2.95±0.04} & 3.77 ± 0.00 & 4.32 ± 0.33 & \underline{3.35±0.05} \\
    30    & \textbf{1.83±0.05} & 2.32 ± 0.15 & 2.68 ± 0.14 & \underline{2.03±0.04} \\
    20    & \textbf{1.22±0.05} & 1.35 ± 0.20 & 1.82 ± 0.13 & \underline{1.23±0.04} \\
    10    & 0.72±0.05 & \textbf{0.55 ± 0.05} & 1.27 ± 0.34 & \underline{0.68±0.07} \\ \bottomrule
    \end{tabular}\vspace{-0.5em}
\label{result-imagenet}\vspace{-1.69em}
\end{table}

Table~\ref{result-imagenet} shows the comparison with the two currently best-performing methods SAT+EM+SR and SAT on ImageNet. Note that their results are directly from~\cite{Entropy+SR}, and the comparison is fair as all training settings are same. We can observe that CCL-SC always performs the best, except at 10\% coverage. However, one may argue that the performance improvement is due to the accuracy improvement, because CCL-SC achieves 1\% accuracy improvement over other methods at full coverage. To mitigate this concern, we also load checkpoints when training is completed in just 100 epochs, and the results are shown in the CCL-SC* column of Table~\ref{result-imagenet}. Now our method achieves slightly lower accuracy than SAT+EM+SR at full coverage, but still outperforms both SAT+EM+SR and SAT when the coverage is between 20\% and 80\%.

\subsection{Comparison with Other Related Methods}
As described in Section~\ref{sec-sc related}, selective classification is closely related to model calibration and Human-AI collaboration. In this section, we introduce five methods from model calibration, including Focal loss~\cite{focal_loss}, Adaptive Focal loss~\cite{afocalloss}, Soft AvUC loss~\cite{softavucece}, Soft ECE loss~\cite{softavucece}, and MMCE loss~\cite{mmce}, as well as AUCOC loss~\cite{aucoc} from Human-AI collaboration into selective classification. Tabel~\ref{result-other-method} in Appeidx~\ref{ref-compare-other} shows that CCL-SC has the lowest selective risk at various coverage rates compared to these methods.

\subsection{Alignment between Proposed Theory and Method}
\mbox{Theorem~\ref{thm:bound}} discloses that a smaller intra-class variance of feature representation of a model will enhance its generalization performance of selective classification, and our method CCL-SC explicitly optimizes this aspect. Although previous experiments have confirmed the superiority of our method in the generalization performance of selective classification, we still need to answer through experiments whether our method has really obtained lower intra-class variance and tighter bounds in \mbox{Theorem~\ref{thm:bound}} compared to other selective classification methods. We show the changes of the intra-class variance and the bound in \mbox{Theorem~\ref{thm:bound}} of different methods during the training process on CIFAR-100 in Figure~\ref{fig:theorem-method} in Appendix~\ref{sec-theoty-method}. It can be observed that CCL-SC does have the lowest intra-class variance and the lowest bound. It is worth mentioning that the relative order of the intra-class variance and the bounds of different methods is consistent with their actual order of selective risk, and the methods with similar selective risk (such as SAT+EM and SAT) also have similar intra-class variance/bound, indicating the importance of intra-class variance for selective classification performance and the usefulness of our bound.

\subsection{Learned Feature Representation}
Because our method CCL-SC explicitly optimizes the feature layer of the model, we compare its learned feature representations with those of SR trained only using the cross-entropy loss on CIFAR-10 at coverage 95\%. The t-SNE~\cite{t-SNE} visualization shown in Figure~\ref{tSNE-cifar10} in Appendix~\ref{tsne-sec} clearly demonstrates that CCL-SC achieves more significant inter-class separation and intra-class aggregation in the feature space, which confirms that optimizing the feature layer contributes to performance improvement in selective classification.

\subsection{Ablation Studies and Hyper-parameter Sensitivity}\label{params}

Next, we conduct ablation studies and parameter sensitivity analysis on CIFAR-100 for the proposed method CCL-SC. 

\textbf{SR-weighted} We verify whether the SR-weighted manner in the proposed CSC loss $L_{\text{CSC}}$ really improves the performance of selective classification. Table~\ref{result-Ablation} in Appendix~\ref{ab-sec} shows that the original CCL-SC method using the SR-weighted CSC loss consistently achieves lower selective risk than that using the unweighted CSC loss across all coverage degrees. 

\textbf{The contrastive learning method of CCL-SC} We first conduct ablation experiments for the construction of negative samples. For convenience, we name the ablation method CCL-SC2. For the negative samples of the samples that are correctly classified as class $y$, CCL-SC2 contains not only the samples misclassified as class $y$ in the queue defined in CCL-SC, but also the samples from other classes in the queue. Table~\ref{result-ccl-sc2} in Appendix~\ref{ab-sec} shows that even with the addition of negative samples, the performance of CCL-SC2 will not be improved compared to the original CCL-SC. This implies that the improvement of the performance is likely to be from the negative samples we define. To confirm this conclusion, we conduct another ablation study in which we remove the negative samples defined in CCL-SC, and only use randomly sampled samples from other categories as negative samples. For simplicity, we name this ablation method CCL-SC3. Table~\ref{result-ccl-sc3} in Appendix~\ref{ab-sec} shows that CCL-SC3 has a significant performance decrease compared to CCL-SC, which demonstrates the effectiveness of our strategy to construct negative samples.

We also conduct ablation experiments for the whole contrastive method of CCL-SC. Specifically, we introduce the positive and negative sample definition method and loss function from \cite{supcon} into our CCL-SC method, while keeping the other components consistent. We name this ablation method CCL-SC+SupCon. The comparison results on CIFAR-100 are shown in Table~\ref{result-ccl-sc3} in Appendix~\ref{ab-sec}. It can be observed that the selective classification performance of the original CCL-SC is better than CCL-SC+SupCon.

\textbf{Hyper-parameter Sensitivity} We then analyze the influence of four important hyper-parameters: the momentum coefficient $q$, queue size $s$, weight coefficient $w$, and initial epochs $E_s$. Tables~\ref{result-m} to~\ref{result-Es} in Appendix~\ref{ab-sec} show that the performance of CCL-SC is generally not sensitive to their settings, i.e., CCL-SC can achieve good performance in a wide range of these hyper-parameters. Detailed results can be found in Appendix~\ref{ab-sec}.

\subsection{Combination of CCL-SC and Existing Methods}\label{improvement}

Finally, we are to verify another benefit of CCL-SC, i.e., it can be seamlessly integrated with existing methods that optimize the model at the classification layer, because CCL-SC operates on the feature representation of the model. Here, we combine CCL-SC with SAT~\cite{NIPS:SAT,SAT} and EM~\cite{Entropy+SR} methods. That is, the loss function at the classification layer is modified from the cross-entropy loss to the loss of SAT with EM regularization. Table~\ref{result-improvement} and Tabel~\ref{result-improvement-imagenet} in Appendix~\ref{improve-sec} show that such a combination outperforms the original CCL-SC significantly.

\section{Conclusion}

This paper proves a generalization bound for selective classification, disclosing that optimizing feature layers to reduce intra-class variance is helpful for improving the performance. Inspired by this theory, we adapt contrastive learning to explicitly optimize the model at the feature layer, resulting in the new method CCL-SC for selective classification. Extensive experiments show that CCL-SC clearly outperforms state-of-the-art methods, and can also be naturally combined with existing techniques to bring further improvement. This work supplements previous selective classification methods which focus solely on modifying the classification layer, and might encourage the exploration of new methods considering the optimization of feature layer.

\section*{Acknowledgements}
The authors want to thank the anonymous reviewers for their helpful comments and suggestions. This work was supported by the National Science and Technology Major Project (2022ZD0116600), National Science Foundation of China (62276124, 62306104), Jiangsu Science Foundation (BK20230949), China Postdoctoral Science Foundation (2023TQ0104), Jiangsu Excellent Postdoctoral Program (2023ZB140), and the Collaborative Innovation Center of Novel Software Technology and Industrialization. 

\section*{Impact Statement}
This paper presents work whose goal is to advance the field of selective classification. There are many potential societal consequences of our work, none of which we feel must be specifically highlighted here.
\bibliography{example_paper}

\begin{thebibliography}{49}
\providecommand{\natexlab}[1]{#1}
\providecommand{\url}[1]{\texttt{#1}}
\expandafter\ifx\csname urlstyle\endcsname\relax
  \providecommand{\doi}[1]{doi: #1}\else
  \providecommand{\doi}{doi: \begingroup \urlstyle{rm}\Url}\fi

\bibitem[Amodei et~al.(2016)Amodei, Olah, Steinhardt, Christiano, Schulman, and Man{\'{e}}]{AIsafty:ConcreteProblem}
Amodei, D., Olah, C., Steinhardt, J., Christiano, P.~F., Schulman, J., and Man{\'{e}}, D.
\newblock Concrete problems in {AI} safety.
\newblock \emph{arXiv:1606.06565}, 2016.

\bibitem[Bachman et~al.(2019)Bachman, Hjelm, and Buchwalter]{e2e}
Bachman, P., Hjelm, R.~D., and Buchwalter, W.
\newblock Learning representations by maximizing mutual information across views.
\newblock In \emph{Advances in Neural Information Processing Systems 32 (NeurIPS)}, pp.\  15509--15519, Vancouver, Canada, 2019.

\bibitem[Cao et~al.(2019)Cao, Wei, Gaidon, Ar{\'{e}}chiga, and Ma]{smooth}
Cao, K., Wei, C., Gaidon, A., Ar{\'{e}}chiga, N., and Ma, T.
\newblock Learning imbalanced datasets with label-distribution-aware margin loss.
\newblock In \emph{Advances in Neural Information Processing Systems 32 (NeurIPS)}, pp.\  1565--1576, Vancouver, Canada, 2019.

\bibitem[Chen et~al.(2020)Chen, Kornblith, Norouzi, and Hinton]{simclr}
Chen, T., Kornblith, S., Norouzi, M., and Hinton, G.
\newblock A simple framework for contrastive learning of visual representations.
\newblock In \emph{Proceedings of the 37th International Conference on Machine Learning (ICML)}, pp.\  1597--1607, Virtual, 2020.

\bibitem[Chow(1970)]{fitst-work}
Chow, C.
\newblock On optimum recognition error and reject tradeoff.
\newblock \emph{{IEEE} Transactions on Information Theory}, 16\penalty0 (1):\penalty0 41--46, 1970.

\bibitem[Cortes et~al.(2016)Cortes, DeSalvo, and Mohri]{boostingnips2016}
Cortes, C., DeSalvo, G., and Mohri, M.
\newblock Boosting with abstention.
\newblock In \emph{Advances in Neural Information Processing Systems 29 (NeurIPS)}, pp.\  1660--1668, Barcelona, Spain, 2016.

\bibitem[Cortes et~al.(2018)Cortes, DeSalvo, Gentile, Mohri, and Yang]{onlinelearningpmlr}
Cortes, C., DeSalvo, G., Gentile, C., Mohri, M., and Yang, S.
\newblock Online learning with abstention.
\newblock In \emph{Proceedings of the 35th International Conference on Machine Learning (ICML)}, pp.\  1059--1067, Stockholm, Sweden, 2018.

\bibitem[Cortes et~al.(2023)Cortes, DeSalvo, and Mohri]{margintheory}
Cortes, C., DeSalvo, G., and Mohri, M.
\newblock Theory and algorithms for learning with rejection in binary classification.
\newblock \emph{Annals of Mathematics and Artificial Intelligence}, pp.\  1--39, 2023.

\bibitem[Deng et~al.(2009)Deng, Dong, Socher, Li, Li, and Fei-Fei]{imagenet}
Deng, J., Dong, W., Socher, R., Li, L.-J., Li, K., and Fei-Fei, L.
\newblock Imagenet: A large-scale hierarchical image database.
\newblock In \emph{Proceedings of the 2009 IEEE Conference on Computer Vision and Pattern Recognition (CVPR)}, pp.\  248--255, Miami, FL, 2009.

\bibitem[El{-}Yaniv \& Wiener(2010)El{-}Yaniv and Wiener]{Problem:El-YanivW10}
El{-}Yaniv, R. and Wiener, Y.
\newblock On the foundations of noise-free selective classification.
\newblock \emph{Journal of Machine Learning Research}, 11:\penalty0 1605--1641, 2010.

\bibitem[Esteva et~al.(2017)Esteva, Kuprel, Novoa, Ko, Swetter, Blau, and Thrun]{Nature:medicalDiagnosis}
Esteva, A., Kuprel, B., Novoa, R.~A., Ko, J., Swetter, S.~M., Blau, H.~M., and Thrun, S.
\newblock Dermatologist-level classification of skin cancer with deep neural networks.
\newblock \emph{Nature}, 542\penalty0 (7639):\penalty0 115--118, 2017.

\bibitem[Feng et~al.(2023)Feng, Ahmed, Hajimirsadeghi, and Abdi]{Entropy+SR}
Feng, L., Ahmed, M.~O., Hajimirsadeghi, H., and Abdi, A.~H.
\newblock Towards better selective classification.
\newblock In \emph{The 11th International Conference on Learning Representations (ICLR)}, Kigali, Rwanda, 2023.

\bibitem[Gal \& Ghahramani(2016)Gal and Ghahramani]{MC-dropout}
Gal, Y. and Ghahramani, Z.
\newblock Dropout as a {bayesian} approximation: Representing model uncertainty in deep learning.
\newblock In \emph{Proceedings of the 33rd International Conference on Machine Learning {(ICML)}}, pp.\  1050--1059, New York, NY, 2016.

\bibitem[Geifman \& El{-}Yaniv(2017)Geifman and El{-}Yaniv]{GeifmanE17:coverage}
Geifman, Y. and El{-}Yaniv, R.
\newblock Selective classification for deep neural networks.
\newblock In \emph{Advances in Neural Information Processing Systems 30 (NeurIPS)}, pp.\  4878--4887, Long Beach, CA, 2017.

\bibitem[Geifman \& El{-}Yaniv(2019)Geifman and El{-}Yaniv]{SelectiveNet}
Geifman, Y. and El{-}Yaniv, R.
\newblock Selectivenet: {A} deep neural network with an integrated reject option.
\newblock In \emph{Proceedings of the 36th International Conference on Machine Learning {(ICML)}}, pp.\  2151--2159, Long Beach, CA, 2019.

\bibitem[Ghodsi et~al.(2021)Ghodsi, Hari, Frosio, Tsai, Troccoli, Keckler, Garg, and Anandkumar]{self-driving}
Ghodsi, Z., Hari, S. K.~S., Frosio, I., Tsai, T., Troccoli, A.~J., Keckler, S.~W., Garg, S., and Anandkumar, A.
\newblock Generating and characterizing scenarios for safety testing of autonomous vehicles.
\newblock \emph{arXiv:2103.07403}, 2021.

\bibitem[Grandvalet et~al.(2008)Grandvalet, Rakotomamonjy, Keshet, and Canu]{svm2008nips}
Grandvalet, Y., Rakotomamonjy, A., Keshet, J., and Canu, S.
\newblock Support vector machines with a reject option.
\newblock In \emph{Advances in Neural Information Processing Systems 21 (NeurIPS)}, pp.\  537--544, Vancouver, Canada, 2008.

\bibitem[Guo et~al.(2017)Guo, Pleiss, Sun, and Weinberger]{calibration}
Guo, C., Pleiss, G., Sun, Y., and Weinberger, K.~Q.
\newblock On calibration of modern neural networks.
\newblock In \emph{Proceedings of the 34th International Conference on Machine Learning (ICML)}, pp.\  1321--1330, Sydney, Australia, 2017.

\bibitem[He et~al.(2016)He, Zhang, Ren, and Sun]{resnet}
He, K., Zhang, X., Ren, S., and Sun, J.
\newblock Deep residual learning for image recognition.
\newblock In \emph{Proceedings of the 2016 IEEE Conference on Computer Vision and Pattern Recognition (CVPR)}, pp.\  770--778, Las Vegas, NV, 2016.

\bibitem[He et~al.(2020)He, Fan, Wu, Xie, and Girshick]{moco}
He, K., Fan, H., Wu, Y., Xie, S., and Girshick, R.
\newblock Momentum contrast for unsupervised visual representation learning.
\newblock In \emph{Proceedings of the 2020 IEEE Conference on Computer Vision and Pattern Recognition (CVPR)}, pp.\  9729--9738, Seattle, WA, 2020.

\bibitem[He et~al.(2024)He, Wu, Qian, and Zhou]{dep}
He, Y., Wu, Y., Qian, C., and Zhou, Z.
\newblock Margin distribution and structural diversity guided ensemble pruning.
\newblock \emph{Machine Learning}, 113\penalty0 (6):\penalty0 3545--3567, 2024.

\bibitem[Hellman(1970)]{knn1970}
Hellman, M.~E.
\newblock The nearest neighbor classification rule with a reject option.
\newblock \emph{IEEE Transactions on Systems Science and Cybernetics}, 6\penalty0 (3):\penalty0 179--185, 1970.

\bibitem[Hendrycks \& Gimpel(2017)Hendrycks and Gimpel]{SR}
Hendrycks, D. and Gimpel, K.
\newblock A baseline for detecting misclassified and out-of-distribution examples in neural networks.
\newblock In \emph{The 5th International Conference on Learning Representations (ICLR)}, Toulon, France, 2017.

\bibitem[Huang et~al.(2020)Huang, Zhang, and Zhang]{NIPS:SAT}
Huang, L., Zhang, C., and Zhang, H.
\newblock Self-adaptive training: Beyond empirical risk minimization.
\newblock In \emph{Advances in Neural Information Processing Systems 33 (NeurIPS)}, pp.\  19365--19376, virtual, 2020.

\bibitem[Huang et~al.(2022)Huang, Zhang, and Zhang]{SAT}
Huang, L., Zhang, C., and Zhang, H.
\newblock Self-adaptive training: Bridging supervised and self-supervised learning.
\newblock \emph{IEEE Transactions on Pattern Analysis and Machine Intelligence}, pp.\  1--17, 2022.

\bibitem[Karandikar et~al.(2021)Karandikar, Cain, Tran, Lakshminarayanan, Shlens, Mozer, and Roelofs]{softavucece}
Karandikar, A., Cain, N., Tran, D., Lakshminarayanan, B., Shlens, J., Mozer, M.~C., and Roelofs, B.
\newblock Soft calibration objectives for neural networks.
\newblock In \emph{Advances in Neural Information Processing Systems 34 (NeurIPS)}, pp.\  29768--29779, virtual, 2021.

\bibitem[Khosla et~al.(2020)Khosla, Teterwak, Wang, Sarna, Tian, Isola, Maschinot, Liu, and Krishnan]{supcon}
Khosla, P., Teterwak, P., Wang, C., Sarna, A., Tian, Y., Isola, P., Maschinot, A., Liu, C., and Krishnan, D.
\newblock Supervised contrastive learning.
\newblock In \emph{Advances in Neural Information Processing Systems 33 (NeurIPS)}, virtual, 2020.

\bibitem[Krizhevsky et~al.(2009)Krizhevsky, Hinton, et~al.]{cifar}
Krizhevsky, A., Hinton, G., et~al.
\newblock Learning multiple layers of features from tiny images.
\newblock \emph{Technical Report}, 2009.

\bibitem[Kumar et~al.(2018)Kumar, Sarawagi, and Jain]{mmce}
Kumar, A., Sarawagi, S., and Jain, U.
\newblock Trainable calibration measures for neural networks from kernel mean embeddings.
\newblock In \emph{Proceedings of the 35th International Conference on Machine Learning (ICML)}, pp.\  2810--2819, Stockholm, Sweden, 2018.

\bibitem[Lakshminarayanan et~al.(2017)Lakshminarayanan, Pritzel, and Blundell]{DeepEnsemble}
Lakshminarayanan, B., Pritzel, A., and Blundell, C.
\newblock Simple and scalable predictive uncertainty estimation using deep ensembles.
\newblock In \emph{Advances in Neural Information Processing Systems 30 (NeurIPS)}, pp.\  6402--6413, Long Beach, CA, 2017.

\bibitem[Lin et~al.(2017)Lin, Goyal, Girshick, He, and Doll{\'{a}}r]{focal_loss}
Lin, T., Goyal, P., Girshick, R.~B., He, K., and Doll{\'{a}}r, P.
\newblock Focal loss for dense object detection.
\newblock In \emph{Proceedings of the 2017 IEEE International Conference on Computer Vision (ICCV)}, pp.\  2999--3007, Venice, Italy, 2017.

\bibitem[Liu et~al.(2023)Liu, Mu, and Qian]{liudx}
Liu, D., Mu, X., and Qian, C.
\newblock Human assisted learning by evolutionary multi-objective optimization.
\newblock In \emph{Proceedings of the 37th AAAI Conference on Artificial Intelligence (AAAI)}, pp.\  12453--12461, Washington, DC, 2023.

\bibitem[Liu et~al.(2015)Liu, Luo, Wang, and Tang]{celeba}
Liu, Z., Luo, P., Wang, X., and Tang, X.
\newblock Deep learning face attributes in the wild.
\newblock In \emph{Proceedings of the 2015 IEEE International Conference on Computer Vision (ICCV)}, pp.\  3730--3738, Santiago, Chile, 2015.

\bibitem[Liu et~al.(2019)Liu, Wang, Liang, Salakhutdinov, Morency, and Ueda]{DeepGambler}
Liu, Z., Wang, Z., Liang, P.~P., Salakhutdinov, R., Morency, L., and Ueda, M.
\newblock Deep gamblers: Learning to abstain with portfolio theory.
\newblock In \emph{Advances in Neural Information Processing Systems 32 (NeurIPS)}, pp.\  10622--10632, {Vancouver}, Canada, 2019.

\bibitem[Lyu et~al.(2019)Lyu, Yang, and Zhou]{lyu2019refined}
Lyu, S.-H., Yang, L., and Zhou, Z.-H.
\newblock A refined margin distribution analysis for forest representation learning.
\newblock In \emph{Advances in Neural Information Processing Systems 32 (NeurIPS)}, pp.\  5531--5541, Vancouver, Canada, 2019.

\bibitem[Lyu et~al.(2022)Lyu, Wang, and Zhou]{lyu2022improving}
Lyu, S.-H., Wang, L., and Zhou, Z.-H.
\newblock Improving generalization of deep neural networks by leveraging margin distribution.
\newblock \emph{Neural Networks}, 151:\penalty0 48--60, 2022.

\bibitem[Mohri et~al.(2018)Mohri, Rostamizadeh, and Talwalkar]{foundations}
Mohri, M., Rostamizadeh, A., and Talwalkar, A.
\newblock \emph{Foundations of Machine Learning}.
\newblock MIT Press, Cambridge, MA, 2018.

\bibitem[Mukhoti et~al.(2020)Mukhoti, Kulharia, Sanyal, Golodetz, Torr, and Dokania]{afocalloss}
Mukhoti, J., Kulharia, V., Sanyal, A., Golodetz, S., Torr, P. H.~S., and Dokania, P.~K.
\newblock Calibrating deep neural networks using focal loss.
\newblock In \emph{Advances in Neural Information Processing Systems 33 (NeurIPS)}, pp.\  15288--15299, virtual, 2020.

\bibitem[Neyshabur et~al.(2018)Neyshabur, Bhojanapalli, and Srebro]{pac}
Neyshabur, B., Bhojanapalli, S., and Srebro, N.
\newblock A {PAC}-{B}ayesian approach to spectrally-normalized margin bounds for neural networks.
\newblock In \emph{The 6th International Conference on Learning Representations (ICLR)}, {Vancouver}, Canada, 2018.

\bibitem[Oord et~al.(2018)Oord, Li, and Vinyals]{infoNce}
Oord, A. v.~d., Li, Y., and Vinyals, O.
\newblock Representation learning with contrastive predictive coding.
\newblock \emph{arXiv:1807.03748}, 2018.

\bibitem[Rabanser et~al.(2022)Rabanser, Thudi, Hamidieh, Dziedzic, and Papernot]{snapshot-ensemble}
Rabanser, S., Thudi, A., Hamidieh, K., Dziedzic, A., and Papernot, N.
\newblock Selective classification via neural network training dynamics.
\newblock \emph{arXiv:2205.13532}, 2022.

\bibitem[Sangalli et~al.(2023)Sangalli, Erdil, and Konukoglu]{aucoc}
Sangalli, S., Erdil, E., and Konukoglu, E.
\newblock Expert load matters: Operating networks at high accuracy and low manual effort.
\newblock In \emph{Advances in Neural Information Processing Systems 36 (NeurIPS)}, New Orleans, LA, 2023.

\bibitem[Simonyan \& Zisserman(2015)Simonyan and Zisserman]{vgg}
Simonyan, K. and Zisserman, A.
\newblock Very deep convolutional networks for large-scale image recognition.
\newblock In \emph{The 3rd International Conference on Learning Representations {(ICLR)}}, San Diego, CA, 2015.

\bibitem[Talreja et~al.(2017)Talreja, Valenti, and Nasrabadi]{security-system}
Talreja, V., Valenti, M.~C., and Nasrabadi, N.~M.
\newblock Multibiometric secure system based on deep learning.
\newblock \emph{arXiv:1708.02314}, 2017.

\bibitem[Van~der Maaten \& Hinton(2008)Van~der Maaten and Hinton]{t-SNE}
Van~der Maaten, L. and Hinton, G.
\newblock Visualizing data using t-sne.
\newblock \emph{Journal of Machine Learning Research}, 9\penalty0 (11), 2008.

\bibitem[Wilcoxon(1945)]{rank-sum}
Wilcoxon, F.
\newblock Individual comparisons by ranking methods.
\newblock \emph{Biometrics Bulletin}, 1\penalty0 (6):\penalty0 80--83, 1945.

\bibitem[Wu et~al.(2022)Wu, He, Qian, and Zhou]{mdep}
Wu, Y., He, Y., Qian, C., and Zhou, Z.
\newblock Multi-objective evolutionary ensemble pruning guided by margin distribution.
\newblock In \emph{Proceedings of the 17th International Conference on Parallel Problem Solving from Nature (PPSN)}, pp.\  427--441, Dortmund, Germany, 2022.

\bibitem[Wu et~al.(2018)Wu, Xiong, Yu, and Lin]{memorybank}
Wu, Z., Xiong, Y., Yu, S.~X., and Lin, D.
\newblock Unsupervised feature learning via non-parametric instance discrimination.
\newblock In \emph{Proceedings of the 2018 IEEE Conference on Computer Vision and Pattern Recognition (CVPR)}, pp.\  3733--3742, Salt Lake, UT, 2018.

\bibitem[Zhang et~al.(2022)Zhang, Chen, Zhang, Dong, and Ma]{CDS}
Zhang, L., Chen, X., Zhang, J., Dong, R., and Ma, K.
\newblock Contrastive deep supervision.
\newblock In \emph{Proceedings of the 17th European Conference on Computer Vision (ECCV)}, pp.\  1--19, Tel Aviv, Israel, 2022.

\end{thebibliography}
\bibliographystyle{icml2024}

\appendix
\onecolumn

\section{Table of Notations}
\begin{table}[ht]
	\centering
	\caption{Key symbols and notations.}
	\vskip 0.15in
		\begin{tabular}{cp{13cm}}
			\toprule
			 \textbf{Sign} & \textbf{Description} \\
			\midrule
            $\mathcal{X}$ & The feature space.\\
            $\mathcal{Y}$ & The label space.\\
            $k$ & The number of classes.\\
			$\mathcal{D}$ & The data distribution over $\mathcal{X} \times \mathcal{Y}$.\\  
            $\mathcal{F}$ & The families of predictive probability functions mapping $\mathcal{X}$ to ${[0,1]}^{k}$.\\
            $f(\bm{x})$ & The vector composed of the predicted probabilities of $f$ for sample $\bm{x}$ across $k$ classes.    \\
            $\mathcal{G}$ & The families of confidence functions mapping $\mathcal{X}$ to $[0,1]$.\\
            $g(\bm{x})$ & The confidence of $f(\bm{x})$ estimated by the selective function $g$.\\  
            $c$ & The feature embedding layer of the classifier $f$.\\
            $l$ & The final classification layer of $f$.\\
            $\hat{y}$ & The predictive class by $f$.\\
            $\phi(g)$ & The coverage relies on the selective function $g$.\\
            $R(f, g)$ & The Selective risk of the selective model $(f, g)$.\\
            $h$ & The threshold that determines whether the model chooses to classify or not.\\
			$S$ & The training set.\\
			$m$ & The number of training samples.\\
            $P$ & The queue of all positive samples.\\
            $Q$ & The queue of all negative samples.\\
            $\bm{z}$ & The the normalized feature embedding of sample $\bm{x}$ through the embedding layer $c$.\\
            
			\bottomrule
	\end{tabular}
	\label{tab:notation}
\end{table}

\section{Theorem Proofs}\label{proofs-sec}
\textbf{Theorem.}~~$\forall \rho, \rho^{\prime}, \alpha,\beta,\lambda>0$, and $\forall \delta>0$, with probability at least $1-\delta$ over a training set of size $m$, we have:
\begin{align*}
\mathbb{E}_{(\bm{x}, y) \sim \mathcal{D}}[L_{0}(f, g, \bm{x}, y)] \leq \mathbb{E}_{(\bm{x}, y) \sim S}\left[L_{\mathrm{MH}}^{\rho, \rho^{\prime}}(f, g, \bm{x}, y)\right] +4 \sqrt{\frac{\|l\|_2^2 \operatorname{Var}_{\mathrm{intra}}[c(\bm{x})]+4 \tilde{\rho}^2+\tilde{\rho}^2\|l\|_2^2 \ln \frac{6 m}{\delta}}{\tilde{\rho}^2 m\|l\|_2^2}},
\end{align*}
where $\|l\|_2$ denotes the L2-norm of the classification layer $l$'s parameters, and $\Tilde{\rho}=\min \{\rho/(4 \alpha), \rho^{\prime}/(4 \beta \lambda+2 \alpha)\}$. 

\begin{proof}
To obtain the bound of the gap between the expected selective classification loss $\mathbb{E}_{(\bm{x}, y) \sim \mathcal{D}}[L_{0}(f, g, \bm{x}, y)]$ and the empirical margin loss $\mathbb{E}_{(\bm{x}, y) \sim S}\left[L_{\mathrm{MH}}^{\rho, \rho^{\prime}}(f, g, \bm{x}, y)\right]$, we start to prove a PAC-Bayesian bound:
\begin{lemma}\label{lem:pac}
Let $f_{\bm{w}}: \mathcal{X} \rightarrow \mathcal{Y}$ be any predictor with parameters $\bm{w}$, and $P$ be any distribution on the parameters that is independent of the training data. Then, for any $\rho, \rho^{\prime}, \alpha, \beta, \delta>0$, with probability at least $1-\delta$ over the training set of size $m$, for any $\bm{w}$, and any random perturbation $\bm{u}$ s.t. $\mathbb{P}_{\bm{u}}\left[\max _{\bm{x} \in S}\left|f_{\bm{w}+\bm{u}}(\bm{x})-f_{\bm{w}}(\bm{x})\right|_2<\right.$ $\left.\min \left\{\frac{\rho}{4 \alpha}, \frac{\rho^{\prime}}{4 \beta \lambda+2 \alpha}\right\}\right] \geq 1 / 2$, we have
\begin{equation}
\mathbb{E}_{(\bm{x}, y) \sim \mathcal{D}}[L_{0}(f, g, \bm{x}, y)] \leq \mathbb{E}_{(\bm{x}, y) \sim S}\left[L_{\mathrm{MH}}^{\rho, \rho^{\prime}}(f, g, \bm{x}, y)\right]+4 \sqrt{\frac{D_{\mathrm{KL}}(\bm{w}+\bm{u} \mid P)+\ln \frac{6 m}{\delta}}{m-1}} ,
\end{equation}
where $D_{\mathrm{KL}}(P\mid Q)$ denotes the Kullback-Leibler divergence between $P$ and $Q$.
\end{lemma}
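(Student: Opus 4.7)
The plan is to adapt the classical ``PAC-Bayes + perturbation trick'' (McAllester, 2003; Neyshabur et al., 2018) to the selective-classification setting. At the top level I would: (i) apply a standard PAC-Bayes bound to the randomized predictor $f_{\bm{w}+\bm{u}}$ against an intermediate-margin surrogate $L_{\mathrm{int}}$ of the selective loss $L_0$; (ii) truncate $\bm{u}$ to the good event $E=\{\max_{\bm{x}\in S}|f_{\bm{w}+\bm{u}}(\bm{x})-f_{\bm{w}}(\bm{x})|_2<\tilde{\rho}\}$, which has probability $\geq 1/2$ by hypothesis, paying at most $\ln 2$ in the KL term; and (iii) use the $\tilde{\rho}$-tameness on $E$ to sandwich $L_0(f_{\bm{w}})$ between $L_{\mathrm{int}}(f_{\bm{w}+\bm{u}})$ on $\mathcal{D}$ and $L_{MH}^{\rho,\rho'}(f_{\bm{w}})$ on $S$.

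Let $\tilde{Q}$ denote the truncated posterior, so $D_{\mathrm{KL}}(\tilde{Q} \mid P) \leq D_{\mathrm{KL}}(\bm{w}+\bm{u} \mid P) + \ln 2$. Applying a PAC-Bayes inequality with the $\sqrt{(\cdot)/(m-1)}$ rate (e.g.\ Maurer's form, after rescaling $L_0$ to $[0,1]$ when $\lambda>1$) to $\tilde{Q}$ yields
\[
\mathbb{E}_{\bm{u}\sim\tilde{Q}}\mathbb{E}_{\mathcal{D}}[L_{\mathrm{int}}(f_{\bm{w}+\bm{u}})]\ \leq\ \mathbb{E}_{\bm{u}\sim\tilde{Q}}\mathbb{E}_S[L_{\mathrm{int}}(f_{\bm{w}+\bm{u}})] + 4\sqrt{\tfrac{D_{\mathrm{KL}}(\bm{w}+\bm{u} \mid P)+\ln(6m/\delta)}{m-1}},
\]
for a suitable $L_{\mathrm{int}}$ that is the selective analog of the half-margin loss. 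On $E$, each logit shifts by less than $\tilde{\rho}$, so $\gamma(\bm{x})$ (a logit difference) shifts by less than $2\tilde{\rho}$ and $g(\bm{x})$ by less than $\tilde{\rho}$. A case analysis on the signs of $\gamma(\bm{x})-\rho$, $g(\bm{x})-h$, and $g(\bm{x})-\rho'$ then gives, for every $\bm{u}\in E$, the two pointwise inequalities $L_0(f_{\bm{w}},g,\bm{x},y)\leq L_{\mathrm{int}}(f_{\bm{w}+\bm{u}},g,\bm{x},y)$ and $L_{\mathrm{int}}(f_{\bm{w}+\bm{u}},g,\bm{x},y)\leq L_{MH}^{\rho,\rho'}(f_{\bm{w}},g,\bm{x},y)$. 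Taking $\mathbb{E}_{\bm{u}\sim\tilde{Q}}$ (which is a no-op on the $\bm{u}$-independent quantity $L_0(f_{\bm{w}})$) and chaining produces the claimed bound.

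The specific value $\tilde{\rho}=\min\{\rho/(4\alpha),\,\rho'/(4\beta\lambda+2\alpha)\}$ is exactly calibrated so that the two hinge terms of $L_{MH}^{\rho,\rho'}$---the classification-margin term $\tfrac{\alpha}{2}(g/\rho'-\gamma/\rho)$ and the abstention-margin term $\lambda(1-\beta g/\rho')$---can absorb the $\tilde{\rho}$-perturbation in $g$ and the $2\tilde{\rho}$-perturbation in $\gamma$. The factor $4\alpha$ in the first threshold tracks the slope $\alpha/2$ composed with the factor-of-$2$ from $\gamma$ being a difference of two logits; the coefficient $4\beta\lambda+2\alpha$ combines the slope $\beta\lambda$ of the abstention hinge with the coupling through $g/\rho'$ shared with the classification hinge. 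The main obstacle I anticipate is precisely this case-by-case bookkeeping: enumerating the sign patterns of $\gamma-\rho$, $g-h$, and $g-\rho'$, and verifying that both max-terms of $L_{MH}^{\rho,\rho'}$ pointwise dominate the perturbed $L_{\mathrm{int}}$, and that $L_{\mathrm{int}}$ in turn dominates the unperturbed $L_0(f_{\bm{w}})$. Once this is done, matching the final prefactor $4$ and the logarithmic term $\ln(6m/\delta)$ (by absorbing the $\ln 2$ truncation cost) is a routine algebraic check.
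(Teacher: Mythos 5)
Your proposal follows essentially the same route as the paper's proof: a PAC-Bayes bound in the style of Neyshabur et al.\ applied to a posterior truncated to the good perturbation event of probability at least $1/2$, combined with a pointwise sandwich of $L_{0}(f_{\bm{w}})$ and $L_{\mathrm{MH}}^{\rho,\rho'}(f_{\bm{w}})$ around an intermediate half-margin loss $L_{\mathrm{MH}}^{\rho/2,\rho'/2}$ evaluated at the perturbed weights. The paper likewise states the two sandwich inequalities without spelling out the sign-pattern bookkeeping, so your plan matches it in both structure and level of detail.
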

\begin{proof}[Proof of Lemma~\ref{lem:pac}]
Let $\bm{w}^{\prime}=\bm{w}+\bm{u}$, and $\mathcal{S}_{\bm{w}}$ be the set of perturbations with the following property:
\begin{align*}
\mathcal{S}_{\bm{w}} \subseteq\Big\{\bm{w}^{\prime}\bigg| \max _{\bm{x} \in S}| f_{\bm{w}^{\prime}}(\bm{x})-\left.f_{\bm{w}}(\bm{x})|_2<\min \{\frac{\rho}{4 \alpha}, \frac{\rho^{\prime}}{4 \beta \lambda+2 \alpha}\}\right\}.
\end{align*}

Let $q$ be the probability density function over the parameters $\bm{w}^{\prime}$. We construct a new distribution $\tilde{Q}$ over predictors $f_{\tilde{\bm{w}}}$ where $\tilde{\bm{w}}$ is restricted to $\mathcal{S}_{\bm{w}}$ with the probability density function:
\begin{align*}
\tilde{q}(\tilde{\bm{w}})= \begin{cases}q(\tilde{\bm{w}}) & \text { if } \tilde{\bm{w}} \in \mathcal{S}_{\bm{w}}; \\ 0 & \text { otherwise .}\end{cases}
\end{align*}
According to the lemma assumption, we have $Z=\mathbb{P}\left[\bm{w}^{\prime} \in \mathcal{S}_{\bm{w}}\right] \geq 1 / 2$. Therefore, we can bound the change of the margins and the confidence scores for any instance:
\begin{align*}
\begin{gathered}
\max _{i, j \in[k], \bm{x} \in S}\left|\left(\left|f_{\tilde{\bm{w}}}(\bm{x})[i]-f_{\tilde{\bm{w}}}(\bm{x})[j]\right|\right)-\left(\left|f_{\bm{w}}(\bm{x})[i]-f_{\bm{w}}(\bm{x})[j]\right|\right)\right|<\frac{\rho}{2 \alpha}, \\
\max _{\bm{x} \in S}\left|g_{\tilde{\bm{w}}}(\bm{x})-g_{\bm{w}}(\bm{x})\right|<\frac{\rho^{\prime}}{4 \beta \lambda+2 \alpha}.
\end{gathered}
\end{align*}

Here we define a perturbed loss function as:
\begin{align*}
\mathbb{E}_{(\bm{x}, y) \sim \mathcal{D}}[L_{\mathrm{MH}}^{\rho / 2, \rho^{\prime} / 2}(f, g, \bm{x}, y)]=\mathbb{P}_{\mathcal{D}}\left[\gamma(\bm{x}) \leq \frac{\rho}{2 \alpha}\right] \cdot \mathbb{P}_{\mathcal{D}}\left[g(\bm{x})>-\frac{\rho^{\prime}}{4 \beta \lambda+2 \alpha}\right]+\lambda \cdot \mathbb{P}_{\mathcal{D}}\left[g(\bm{x})>\frac{\rho^{\prime}}{4 \beta \lambda+2 \alpha}\right] .
\end{align*}

We can get the following bounds:
\begin{align*}
\begin{gathered}
\mathbb{E}_{(\bm{x}, y) \sim \mathcal{D}}[L_{0}\left(f_{\bm{w}}, g_{\bm{w}}, \bm{x}, y\right)] \leq \mathbb{E}_{(\bm{x}, y) \sim \mathcal{D}}\left[L_{\mathrm{MH}}^{\rho / 2, \rho^{\prime} / 2}\left(f_{\tilde{\bm{w}}}, g_{\tilde{\bm{w}}}, \bm{x}, y\right)\right], \\
\mathbb{E}_{(\bm{x}, y) \sim S}\left[L_{\mathrm{MH}}^{\rho / 2, \rho^{\prime} / 2}\left(f_{\tilde{\bm{w}}}, g_{\tilde{\bm{w}}}, \bm{x}, y\right)\right] \leq \mathbb{E}_{(\bm{x}, y) \sim S}\left[L_{\mathrm{MH}}^{\rho, \rho^{\prime}}\left(f_{\bm{w}}, g_{\bm{w}}, \bm{x}, y\right)\right].
\end{gathered}
\end{align*}

Finally, using the proof of Lemma~1 in~\cite{pac}, with probability $1-\delta$ over the training set we have:
\begin{align*}
\begin{aligned}
\mathbb{E}_{(\bm{x}, y) \sim \mathcal{D}}[L_{0}\left(f_{\bm{w}}, g_{\bm{w}}, \bm{x}, y\right)] & \leq \mathbb{E}_{\tilde{\bm{w}}}\left[\mathbb{E}_{(\bm{x}, y) \sim \mathcal{D}}\left[L_{\mathrm{MH}}^{\rho / 2, \rho^{\prime} / 2}\left(f_{\tilde{\bm{w}}}, g_{\tilde{\bm{w}}}, \bm{x}, y\right)\right]\right] \\
& \leq \mathbb{E}_{\tilde{\bm{w}}}\left[\mathbb{E}_{(\bm{x}, y) \sim S}\left[L_{\mathrm{MH}}^{\rho / 2, \rho^{\prime} / 2}\left(f_{\tilde{\bm{w}}}, g_{\tilde{\bm{w}}}, \bm{x}, y\right)\right]\right]+2 \sqrt{\frac{2 D_{\mathrm{KL}}(\tilde{\bm{w}} \| P)+\ln \frac{2 m}{\delta}}{m-1}} \\
& \leq \mathbb{E}_{(\bm{x}, y) \sim S}\left[L_{\mathrm{MH}}^{\rho, \rho^{\prime}}\left(f_{\bm{w}}, g_{\bm{w}}, \bm{x}, y\right)\right]+2 \sqrt{\frac{2 D_{\mathrm{KL}}(\tilde{\bm{w}} \| P)+\ln \frac{2 m}{\delta}}{m-1}} \\
& \leq \mathbb{E}_{(\bm{x}, y) \sim S}\left[L_{\mathrm{MH}}^{\rho, \rho^{\prime}}\left(f_{\bm{w}}, g_{\bm{w}}, \bm{x}, y\right)\right]+4 \sqrt{\frac{D_{\mathrm{KL}}\left(\bm{w}^{\prime} \| P\right)+\ln \frac{6 m}{\delta}}{m-1}}.
\end{aligned}
\end{align*}
\end{proof}
\vspace{-3em}
Next, we will consider adding perturbation parameters $u \sim \mathcal{N}\left(0, \sigma^2 \bm{I}\right)$ to the parameters of the classification layer $\bm{w}$, i.e., $\bm{w}^{\prime}=\bm{w}+\bm{u}$, and we assume that the learned feature $c(\bm{x})$ is centered, i.e., $\mathbb{E}[c(\bm{x})]=0$. Then, we have
\begin{align*}
\begin{split}
    \E[\|\vw^\prime c(\vx)-\vw c(\vx)\|_2^2]&=\E[\|\vu\|_2^2\|c(\vx)\|_2^2]\\
    &=\sigma^2 \E[\|c(\vx)\|_2^2]\\
    &=\sigma^2 \operatorname{tr}(\E[c(\vx)c(\vx)^\top]-\E[c(\bm{x})]\E[c(\bm{x})]^\top)\\
    &=\sigma^2 \operatorname{tr}[\operatorname{Cov}[c(\vx)]]\\
    &=\sigma^2 \operatorname{tr}[\E[\operatorname{Cov}[c(\vx)|y]]+\operatorname{Cov}[\E[c(\vx)|y]]]\\
    &=\sigma^2 \operatorname{tr}\left[\sum_{i\in [k]}\operatorname{Cov}[c^i(\vx)] /k+\sum_{i\neq j}(\E[c^i(\vx)]-\E[c^j(\vx)])(\E[c^i(\vx)]-\E[c^j(\vx)])^\top/k(k-1) \right]\\
    &=\sigma^2 \operatorname{tr}\left[\sum_{i\in [k]}\operatorname{Cov}[c^i(\vx)] /k\right]+\sum_{i\neq j}\operatorname{tr}\left[(\E[c^i(\vx)]-\E[c^j(\vx)])(\E[c^i(\vx)]-\E[c^j(\vx)])^\top\right]/k(k-1)  \\
    &\leq \sigma^2 \operatorname{tr}\left[\sum_{i\in [k]}\operatorname{Cov}[c^i(\vx)] /k\right]+\sum_{i\neq j}(2\Tilde{\rho})^2/(\|l\|_2^2k(k-1)) \\
    &= \sigma^2 \operatorname{tr}\left[\sum_{i\in [k]}\operatorname{Cov}[c^i(\vx)] /k\right]+4\Tilde{\rho}^2/\|l\|_2^2.\\
\end{split}
\end{align*}
According to the Markov inequality, we have
\begin{align*}
\mathbb{P}_\beta\left[\max _{\bm{x} \in S}\left|f_{\bm{w}^{\prime}}(\bm{x})-f_{\bm{w}}(\bm{x})\right|_2^2 \geq \frac{\sigma^2}{\delta} \cdot\left(\operatorname{tr}\left[\sum_{i \in[k]} \operatorname{Cov}[c^i(\bm{x})] / k\right]+4 \tilde{\rho}^2 /\|l\|_2^2\right)\right] \leq \delta.
\end{align*}
We set $\delta=1 / 2$, such that the inequality holds with a probability at least $1 / 2$ :
\begin{align*}
\max _{\bm{x} \in S}\left|f_{\bm{w}^{\prime}}(\bm{x})-f_{\bm{w}}(\bm{x})\right|_2^2 \leq 2 \sigma^2\left(\operatorname{tr}\left[\sum_{i \in[k]} \operatorname{Cov}\left[c^i(\bm{x})\right] / k\right]+4 \tilde{\rho}^2 /\|l\|_2^2\right).
\end{align*}

For simplicity, we use $\operatorname{Var}_{\text {intra }}[c(\bm{x})]$ to denote the intra-class variance $\operatorname{tr}\left[\sum_{i \in[k]} \operatorname{Cov}\left[c^i(\bm{x})\right] / k\right]$, then we have
\begin{align*}
\max _{\bm{x} \in S}\left|f_{\bm{w}^{\prime}}(\bm{x})-f_{\bm{w}}(\bm{x})\right|_2^2 \leq 2 \sigma^2\left(\operatorname{Var}_{\text {intra }}[c(\bm{x})]+4 \tilde{\rho}^2 /\|l\|_2^2\right).
\end{align*}

Since we now prove that the perturbation caused by random vector $\bm{u}$ is bounded by a term relative to the variance $\sigma$, we can preset the value of $\sigma$ to make the random perturbation satisfy the condition for Lemma~\ref{lem:pac}.
\begin{align*}
\begin{aligned}
\max _{x \in S}\left|f_{{\boldsymbol{w}}^{\prime}}(\bm{x})-f_{\boldsymbol{w}}(\bm{x})\right|_2^2 & \leq 2 \sigma^2\left(\operatorname{Var}_{\text {intra }}[c(\bm{x})]+4 \tilde{\rho}^2 /\|l\|_2^2\right) \\
& =\min \left\{\frac{\rho}{4 \alpha}, \frac{\rho^{\prime}}{4 \beta \lambda+2 \alpha}\right\}^2 \\
& =\tilde{\rho}^2.
\end{aligned}
\end{align*}

We can derive $\sigma=\tilde{\rho} / \sqrt{2\left(\operatorname{Var}_{\text {intra }}[c(\bm{x})]+4 \tilde{\rho}^2 /\|l\|_2^2\right)}$ from the above inequality. Naturally, we can calculate the Kullback-Leibler divergence in Lemma~\ref{lem:pac} with the chosen distributions for $P \sim \mathcal{N}\left(0, \sigma^2 \bm{I}\right)$:
$$
D_{\mathrm{KL}}(\boldsymbol{w}+\boldsymbol{u} \| P) \leq \frac{|\boldsymbol{w}|^2}{2|\boldsymbol{w}|^2 \sigma^2}=\frac{1}{2 \sigma^2} \leq \frac{\|l\|_2^2 \operatorname{Var}_{\text {intra }}[c(\boldsymbol{x})]+4 \tilde{\rho}^2}{\tilde{\rho}^2\|l\|_2^2}.
$$

Put it in Lemma~\ref{lem:pac}, for any $\boldsymbol{w}$, with probability of at least $1-\delta$ we have:
$$
\mathbb{E}_{(\bm{x}, y) \sim \mathcal{D}}[L_{0}(f, g, \bm{x}, y)]  \leq \mathbb{E}_{(\boldsymbol{x}, y) \sim S}\left[L_{\mathrm{MH}}^{\rho, \rho^{\prime}}(f, g, \boldsymbol{x}, y)\right]+4 \sqrt{\frac{\|l\|_2^2 \operatorname{Var}_{\text {intra }}[c(\boldsymbol{x})]+4 \tilde{\rho}^2+\tilde{\rho}^2\|l\|_2^2 \ln \frac{6 m}{\delta}}{\tilde{\rho}^2 m\|l\|_2^2}} .
$$
\end{proof}

\section{Training with CSC Loss}\label{train-sec}
\begin{algorithm}[ht]
    \caption{Training with CSC loss}
    \label{alg:algorithm}
    \textbf{Input}: Data $\left\{\left(\bm{x}_{i},  y_{i}\right)\right\}_{i=1}^{m}$, initial model $f$  \\
    \textbf{Parameter}: momentum coefficient $q$, queue size $s$, weight coefficient $w$, initial epochs $E_s$
    \begin{algorithmic}[1] 
        \FOR{$e=1: $ maximum epochs}
            \IF {$e = E_s$}
                \STATE Initialize $\bm{\theta}_\mathrm{m} = \bm{\theta}$
            \ENDIF
            \FOR{each mini-batch in the current epoch $e$}
                \IF {$e \geq E_s$}
                \IF {\;Both queues $P$ and $Q$ have been updated more than $s$ tuples}
                \STATE $L = w \cdot L_\text{CSC} + L_\text{CE}$
                \ELSE
                \STATE $L = L_\text{CE}$
                \STATE Fetch $\left\{\left(\bm{z}_{i},  f_{\bm{\theta}_\mathrm{m}}(\bm{x}_i)\right)\right\}_{i=1}^{n}$ of mini-batch data;
                \STATE Update $P$ by $\left\{ (\bm{z}_i, f_{\bm{\theta}_\mathrm{m}}(\bm{x}_i)) \mid f_{\bm{\theta}_\mathrm{m}}(\bm{x}_i) = y_i \right\}_{i=1}^{n}$;
                \STATE Update $Q$ by $\left\{ (\bm{z}_i, f_{\bm{\theta}_\mathrm{m}}(\bm{x}_i)) \mid f_{\bm{\theta}_\mathrm{m}}(\bm{x}_i) \neq y_i \right\}_{i=1}^{n}$
                \ENDIF
                \STATE $\bm{\theta}_{\mathrm{m}} \leftarrow q \cdot \bm{\theta}_{\mathrm{m}}+(1-q) \cdot \bm{\theta}$;
                \ELSE
                \STATE $L = L_\text{CE}$
                \ENDIF
                \STATE Update the parameters $\bm{\theta}$ of the model $f$ using an optimizer based on $L$
            \ENDFOR
        \ENDFOR
    \end{algorithmic}
\end{algorithm}
Algorithm~\ref{alg:algorithm} provides the pseudo-code of the training method with the proposed CSC loss. Different from the conventional two-stage training manner ``pre-train then finetune'' used in typical contrastive learning methods~\cite{moco,simclr,supcon}, we employ a one-stage manner to optimize the classification layers and convolutional layers of the model together, because the CSC loss involves the predictive information of the model. 

The model $f$ is initially trained using the cross-entropy loss $L_\text{CE}$ for $E_s$ epochs. When $e \geq E_s$ and meanwhile both queues $P$ and $Q$ have been updated more than $s$ tuples, the training gradient consists of two parts: the gradient of the CSC loss $L_\text{CSC}$ returned from the last feature embedding layer $c$, and the gradient of the cross-entropy loss $L_\text{CE}$ returned from the classification layer $l$. We use a weight coefficient $w$ to balance these two loss items. The parameters $\bm{\theta}$ of the model $f$ will be updated using an optimizer based on the combined loss. As introduced in Section~\ref{sec-samples}, when $e \geq E_s$, a momentum encoder ${f_{\bm{\theta}}}_{\mathrm{m}}$ is used to generate positive and negative samples. When the epoch $e$ equals $E_s$, the parameters $\bm{\theta}_\mathrm{m}$ of the momentum encoder ${f_{\bm{\theta}}}_{\mathrm{m}}$ are initialized to the parameters $\bm{\theta}$ of the current model $f$, which ensures that the momentum encoder has a favorable initial accuracy. After that, the parameters $\bm{\theta}_\mathrm{m}$ of the momentum encoder will be updated according to Eq.~\eqref{update} at each training step. The samples generated by the momentum encoder will be used to update the queues $P$ and $Q$, which store positive and negative samples, respectively. Note that ${f_{\bm{\theta}}}_{\mathrm{m}}$ will not be optimized by the optimizer. 

\section{Detailed Experimental Settings}\label{setting sec}
\subsection{hyper-parameters}
\label{param sec}
We utilize 20\% of the training set for each dataset as the validation set to tune hyper-parameters.  We test the momentum coefficient $q\in \{0.9 0.99, 0.999\}$, and the weight coefficient $w\in \{0.1, 0.5, 1.0\}$. For the queue size $s$, we set it based on the number of classes in the dataset: for datasets with fewer classes such as CelebA and CIFAR-10, we set $s=300$; for datasets with more classes such as CIFAR-100 and ImageNet, we set $s=3000$ and $s=10000$, respectively. For the initial epochs $E_s$, we test it on datasets with total training epochs of 300 using the values~$\{50, 100, 150, 200\}$; for datasets with total training epochs of 150, we test the values~$\{50, 100\}$; for datasets with training epochs of 50, we evaluate the values~$\{0, 1\}$. Table~\ref{param} lists the hyperparameter settings of CCL-SC on each dataset. After tuning the hyper-parameters, we train the model on the entire training set to evaluate performance.
\begin{table}[h]
\caption{The hyper-parameters settings of CCL-SC on various datasets.}\vskip 0.15in
\centering
    \begin{tabular}{ccccc}
    \toprule
    Dataset & $q$     & $s$     & $w$ & $E_s$ \\
    \midrule
    CIFAR-10 & 0.999 & 300   & 0.5   & 150 \\
    CIFAR-100 & 0.99  & 3000  & 1.0     & 150 \\
    CelebA & 0.999 & 300   & 0.5   & 1 \\
    ImageNet & 0.999  & 10000 & 0.1   & 50 \\\bottomrule
    \end{tabular}%
\label{param}
\end{table}

For the hyper-parameters of the baseline methods, we follow the settings provided in their origin paper or released codes. Nevertheless, due to the absence of performance evaluation on CIFAR-100 and CelebA in prior work, we have also applied the same parameter-tuning steps outlined for our method to calibrate the parameters of the baseline methods. Table~\ref{param-baseline} lists the hyperparameter settings for each baseline on CIFAR-100 and CelebA.
\begin{table}[h]
\caption{The hyper-parameters settings of the baselines on various datasets.}\vskip 0.15in
\centering
\begin{tabular}{ccc}\toprule 
    Dataset & Method & hyper-parameters \\
    \midrule
    \multirow{3}[2]{*}{CIFAR-100} & Deep Gambler & Initial epochs $E_s$ = 200,  Reward $o$ = 4.6 \\
          & SAT   & Initial epochs $E_s$ = 200, Momentum term $m_\text{SAT}$ = 0.9 \\
          & SAT+ER & Initial epochs $E_s$ = 200, Momentum term $m_\text{SAT}$ = 0.9, Entropy weight $\beta$ = 0.001 \\
    \midrule
    \multirow{3}[2]{*}{CelebA} & Deep Gambler & Initial epochs $E_s$ = 0,  Reward $o$ = 2.0 \\
          & SAT   & Initial epochs $E_s$ = 0, Momentum term $m_\text{SAT}$ = 0.9 \\
          & SAT+ER & Initial epochs $E_s$ = 0, Momentum term $m_\text{SAT}$ = 0.9, Entropy weight $\beta$ = 0.01 \\
    \bottomrule
    \end{tabular}%

\label{param-baseline}
\end{table}
\subsection{Networks and Training}\label{network sec} 
Following prior work, for CIFAR-10 and CIFAR-100, we use VGG16~\cite{vgg} as the backbones of selective classifiers. The models are trained for 300 epochs using SGD, with an initial learning rate of 0.1, a momentum of 0.9, a weight decay of 5e-4, and a mini-batch size of 64. The learning rate was reduced by 0.5 every 25 epochs. 

For ImageNet, we use ResNet34~\cite{resnet} trained for 150 epochs using SGD, with an initial learning rate of 0.1, a momentum of 0.9, a weight decay of 5e-4, and a mini-batch size of 256. The learning rate was reduced by 0.5 every 10 epochs. 

For CelebA, we use ResNet18~\cite{resnet} trained for 50 epochs using Adam, with an initial learning rate of 1e-5, and a mini-batch size of 64. When evaluating the performance of each method on the CelebA dataset, we employ the checkpoint with the highest accuracy on the CelebA's original separate validation set to assess its performance on the test set.
\subsection{Data Augmentation Methods for each Dataset}
\label{Data Augmentation sec}
For the same dataset, all compared methods utilize the same data augmentation to ensure a fair comparison. For the commonly used selective classification benchmark datasets (including CIFAR-10, and ImageNet) in previous works~\cite{SelectiveNet,DeepGambler,NIPS:SAT,SAT,Entropy+SR}, we adopt the data augmentation settings that have been commonly utilized. For the newly considered challenging selective classification datasets in this work, CIFAR-100 and CelebA, we apply commonly used data augmentation methods from the field of image classification that are tailored for these datasets. Table~\ref{data-augmentation} presents a summary of the data augmentation methods utilized for each of the datasets.
\vspace{-0.8em}
\begin{table}[h]
\caption{Data augmentation methods utilized for each of the datasets.}\vskip 0.15in
\centering
    \begin{tabular}{cc}
    \toprule 
    Dataset & Data Augmentation \\
    \midrule
    \multirow{2}[2]{*}{CIFAR-10} & RandomCrop \\
          & RandomHorizontalFlip \\
    \midrule
    \multirow{2}[2]{*}{CIFAR-100} & RandomCrop \\
          & RandomHorizontalFlip \\
    \midrule
    CelebA & RandomHorizontalFlip \\
    \midrule
    \multirow{3}[2]{*}{ImageNet} & RandomResizedCrop \\
          & RandomHorizontalFlip \\
          & ColorJitter \\
    \bottomrule
    \end{tabular}%

\label{data-augmentation}
\end{table}
\vspace{-0.8em}
\section{Comparison with Other Related Methods} \label{ref-compare-other}
As described in Section~\ref{sec-sc related}, selective classification is closely related to model calibration and the Human-AI collaboration. In this section, we introduce five methods from model calibration, including Focal loss~\cite{focal_loss}, Adaptive Focal loss~\cite{afocalloss}, Soft AvUC loss~\cite{softavucece}, Soft ECE loss~\cite{softavucece}, and MMCE loss~\cite{mmce}, as well as AUCOC loss~\cite{aucoc} from the Human-AI collaboration into selective classification. Tabel~\ref{result-other-method} shows that CCL-SC has the lowest selective risk at various coverage rates compared to these methods.

\vspace{-0.8em}
\begin{table*}[h!]
\caption{Selective risk (\%) on the CIFAR-100 for various coverage rates (\%). The mean and standard deviation are calculated over 5 trials. The best entries are marked in bold.} \vskip 0.15in
\small
\fontsize{9}{10}\selectfont
\centering
    \begin{tabular}{cccccccc}
    \toprule 
    
Coverage & CCL-SC & Adaptive Focal & Soft AvUC & Soft-ECE & MMCE   & AUCOC Loss & Focal Loss  \\
\midrule
100   & \textbf{26.55±0.26} & 27.96±0.12 & 27.93±0.12 & 26.81±0.06 & 27.14±0.24 & 26.75±0.07 & 28.08±0.22 \\
    95    & \textbf{23.54±0.15} & 25.26±0.24 & 24.99±0.05 & 23.93±0.16 & 24.30±0.24 & 23.89±0.05 & 25.38±0.26 \\
    90    & \textbf{20.97±0.20} & 22.63±0.17 & 22.20±0.08 & 21.40±0.1 & 21.66±0.12 & 21.33±0.19 & 22.64±0.37 \\
    85    & \textbf{18.57±0.20} & 20.13±0.09 & 19.68±0.04 & 18.84±0.10 & 18.95±0.25 & 19.17±0.10 & 20.11±0.43 \\
    80    & \textbf{16.07±0.15} & 17.77±0.08 & 17.12±0.05 & 16.35±0.03 & 16.35±0.19 & 16.21±0.05 & 17.64±0.18 \\
    75    & \textbf{13.60±0.19} & 15.48±0.14 & 14.76±0.01 & 14.15±0.11 & 14.21±0.13 & 13.95±0.04 & 15.34±0.14 \\
    70    & \textbf{11.23±0.16} & 13.07±0.11 & 12.46±0.10 & 11.57±0.07 & 11.60±0.16 & 11.47±0.13 & 13.17±0.11 \\
    60    & \textbf{6.83±0.15} & 8.73±0.01 & 7.93±0.13 & 7.28±0.08 & 7.32±0.05 & 7.26±0.13 & 8.85±0.12 \\
    50    & \textbf{3.95±0.22} & 5.67±0.13 & 4.29±0.05 & 4.51±0.07 & 4.32±0.10 & 4.21±0.02 & 5.69±0.11 \\
    40    & \textbf{2.29±0.33} & 4.14±0.29 & 2.48±0.02 & 3.10±0.02 & 3.16±0.11 & 2.31±0.09 & 4.59±0.01 \\
    30    & \textbf{1.26±0.17} & 3.87±0.07 & 1.63±0.03 & 2.72±0.15 & 2.63±0.03 & 1.69±0.04 & 4.25±0.05 \\
    20    & \textbf{0.71±0.12} & 3.95±0.25 & 1.58±0.03 & 2.85±0.20 & 2.40±0.05 & 1.26±0.16 & 4.70±0.30 \\
    10    & \textbf{0.36±0.08} & 4.55±0.45 & 1.45±0.15 & 2.85±0.15 & 2.20±0.00 & 0.88±0.04 & 5.05±0.25 \\\bottomrule
    \end{tabular}
\label{result-other-method}
\end{table*}
\vspace{-0.8em}
\section{Alignment between Proposed Theory and Method}\label{sec-theoty-method}
In this section, we show the changes of the intra-class variance and the bound in \mbox{Theorem~\ref{thm:bound}} of different methods during the training process on CIFAR-100 in Figure~\ref{fig:theorem-method}. It can be observed that CCL-SC does have the lowest intra-class variance and the lowest bound. It is worth mentioning that the relative order of the intra-class variance and the bounds of different methods is consistent with their actual order of selective risk, and the methods with similar selective risk (such as SAT+EM and SAT) also have similar intra-class variance/bound, indicating the importance of intra-class variance for selective classification performance and the usefulness of our bound.
\vspace{-0.8em}
\begin{figure*}[h]
	\centering
	\mbox{
		\subfigure[\label{subfig:a-var} Intra-variance]{\includegraphics[width=0.49\linewidth]{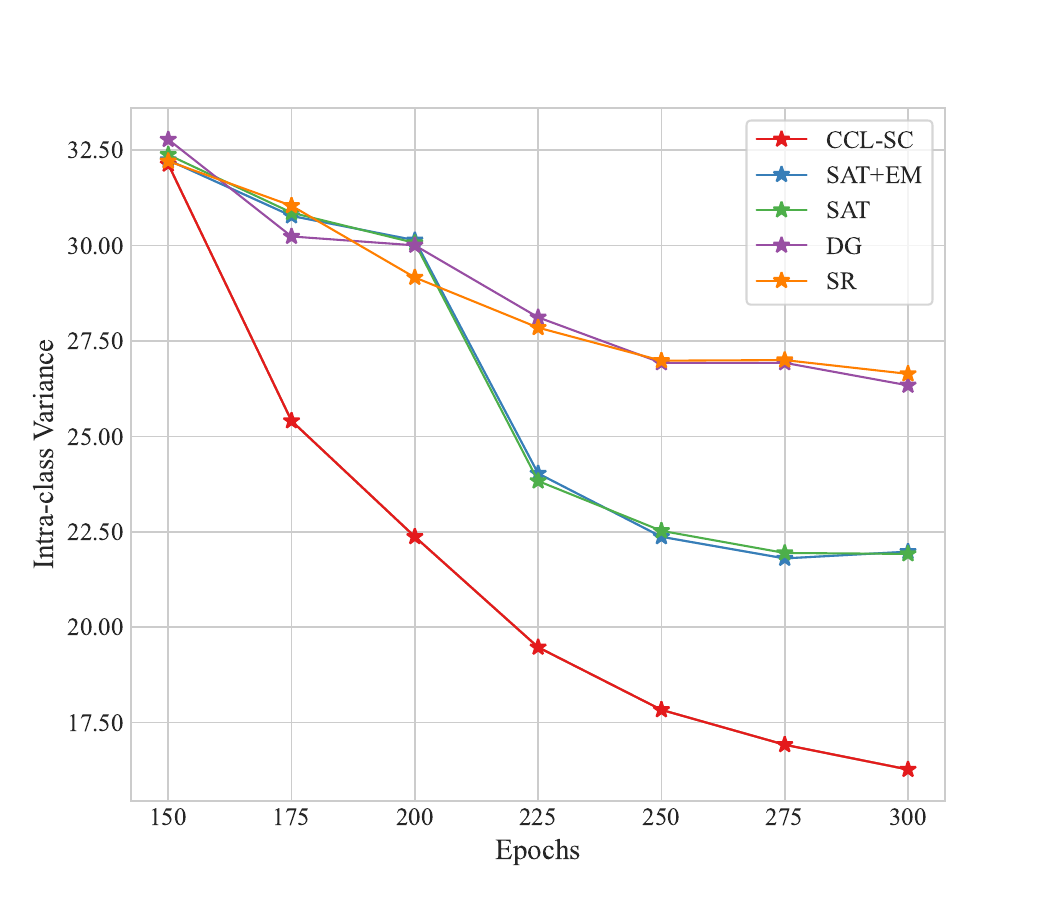}}
		\subfigure[\label{subfig:b-bound} Bound in \mbox{Theorem~\ref{thm:bound}}]{\includegraphics[width=0.49\linewidth]{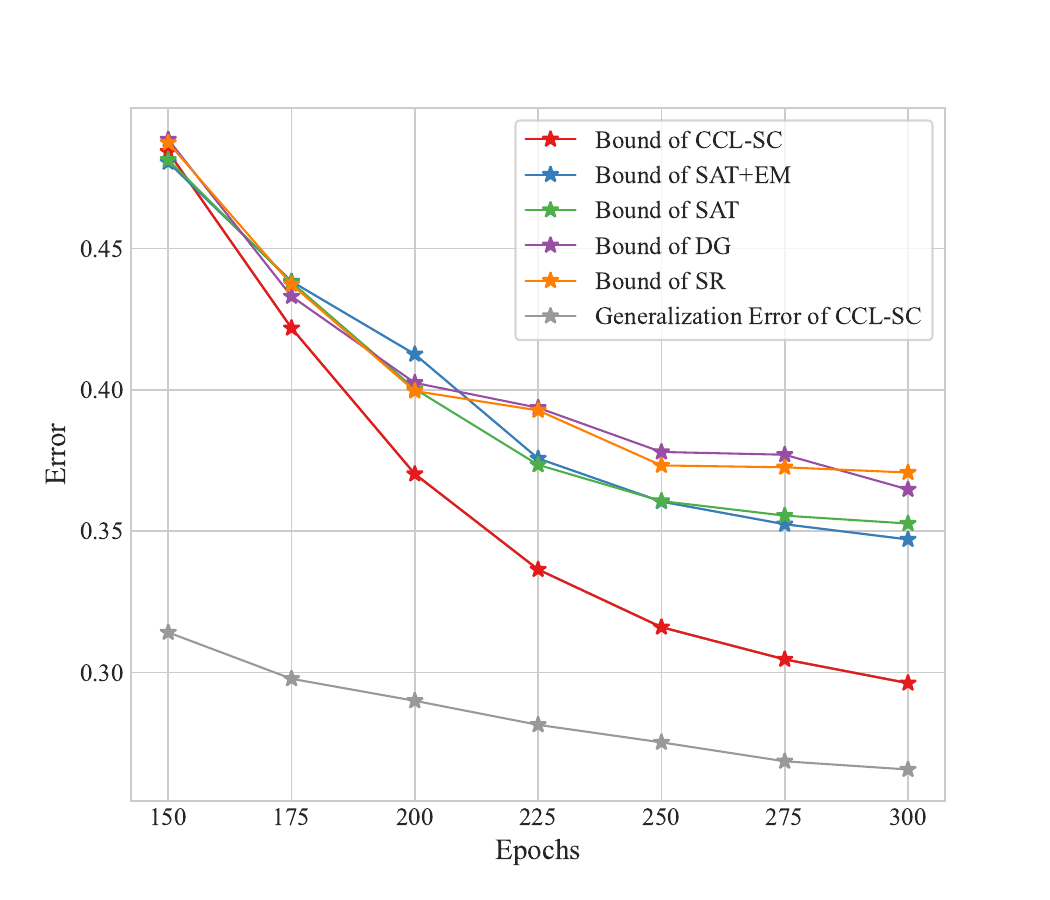}}
	}\vspace{-0em}
	\caption{The intra-class variance (a) and the bound in \mbox{Theorem~\ref{thm:bound}} (b) changes of different methods during the training process on CIFAR-100. In (b), we also include the generalization error of CCL-SC.}
	\label{fig:theorem-method}
\end{figure*}
\vspace{-0.8em}

\section{Learned Feature Representation}\label{tsne-sec}
In this section we compare its learned feature representations with those of SR trained only using the cross-entropy loss on CIFAR-10 at coverage 95\%. The t-SNE~\cite{t-SNE} visualization shown in Figure~\ref{tSNE-cifar10} clearly demonstrates that compared to SR, our method CCL-SC achieves more significant inter-class separation and intra-class aggregation in the feature space for selecting samples for classification. This confirms that optimizing the feature layer contributes to performance improvement in selective classification.

\begin{figure*}[h]
\vspace{-0.8em}
	\centering
	\mbox{
		\subfigure[\label{subfig:a-cifar10} SR]{\includegraphics[width=0.49\linewidth]{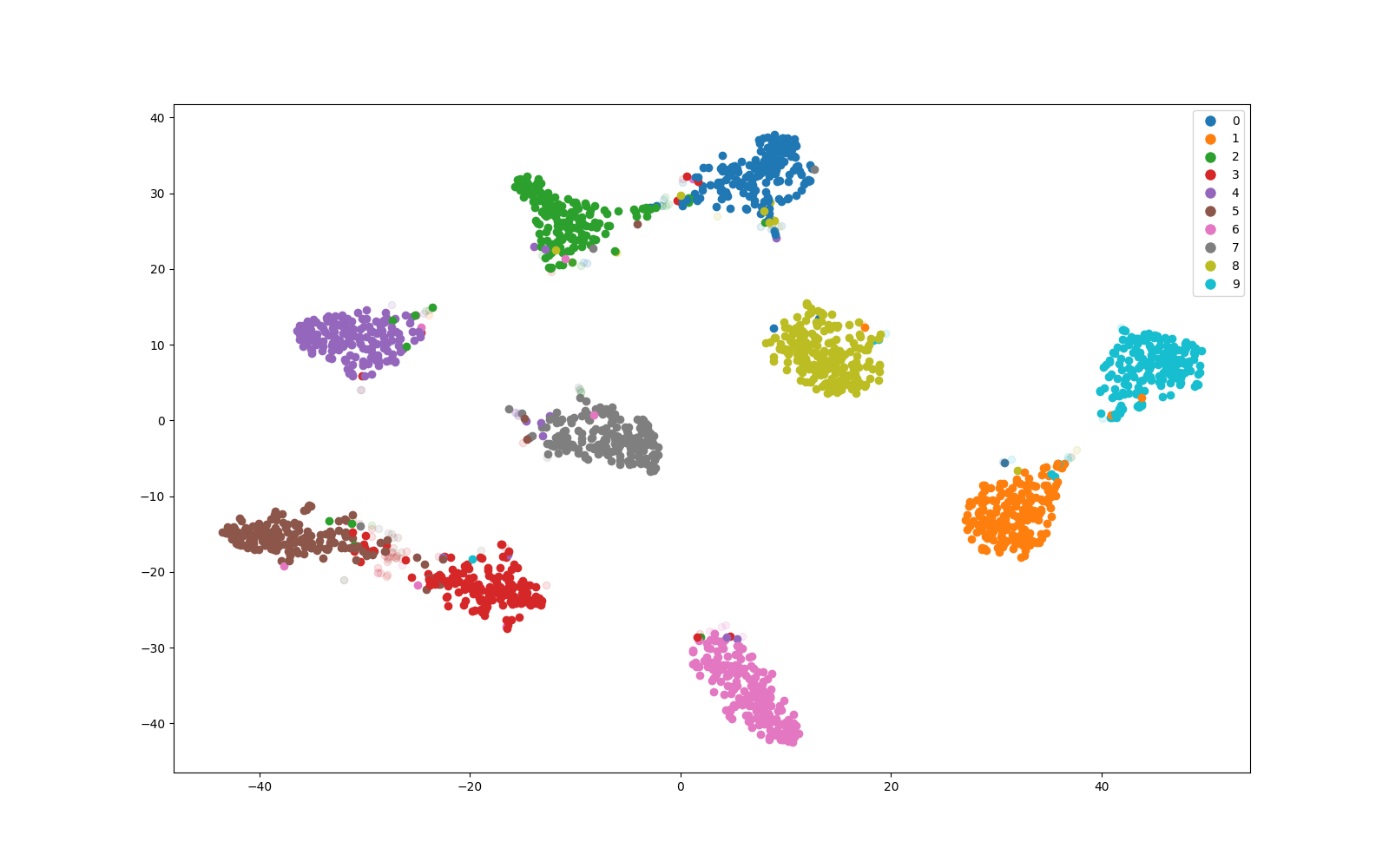}}
		\subfigure[\label{subfig:b-cifar10} CCL-SC]{\includegraphics[width=0.49\linewidth]{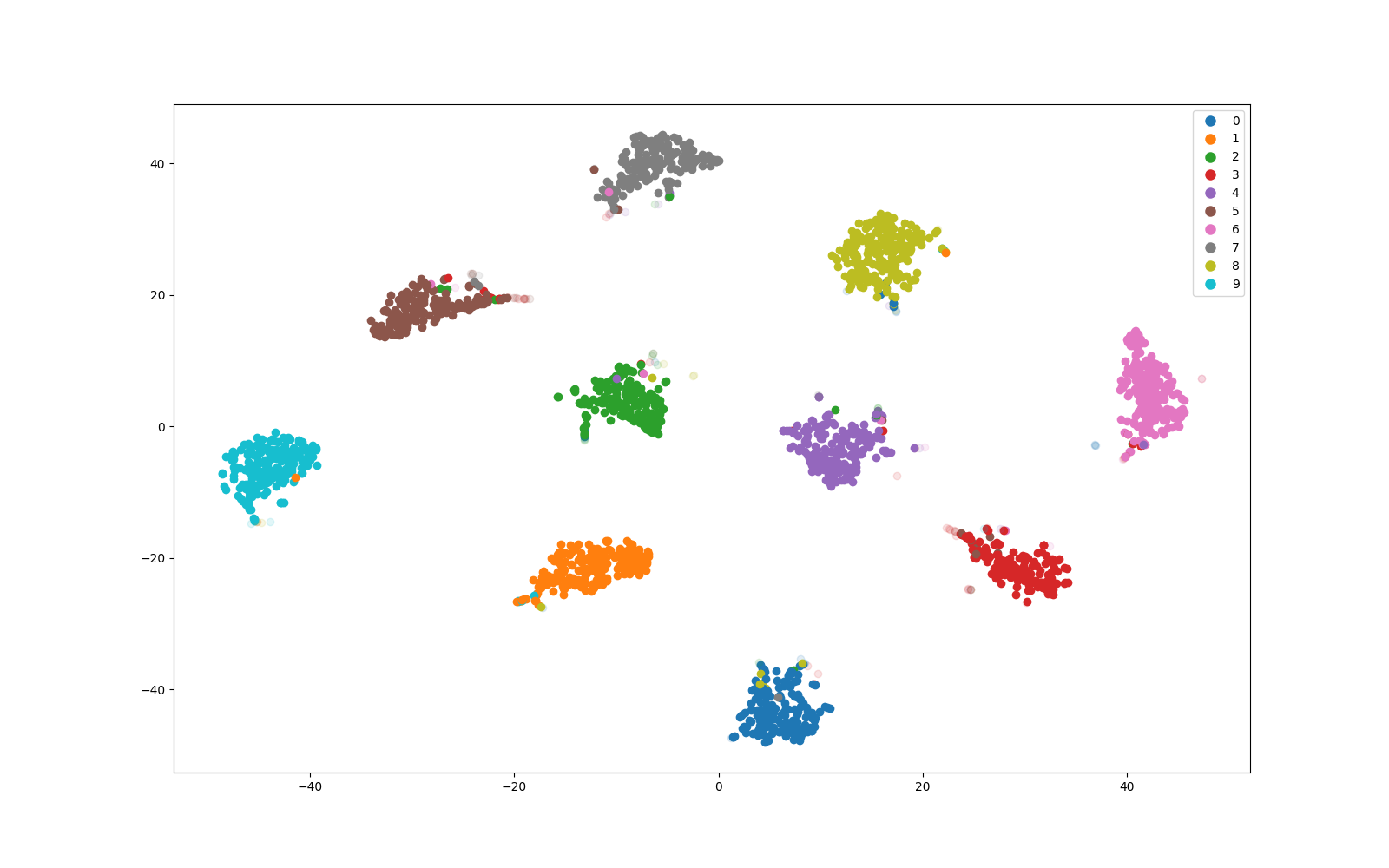}}
	}\vspace{-0em}
	\caption{The t-SNE Visualization~\cite{t-SNE} of SR and CCL-SC feature representations on the CIFAR-10 dataset at 95\% coverage. Point colors indicate class categories. Light-colored points represent samples selected for abstaining from predicting.}
	\label{tSNE-cifar10}\vspace{-0.5em}
\end{figure*}

\section{Ablation Study and Hyper-parameter Sensitivity Results}\label{ab-sec}
\textbf{SR-weighted} To verify whether the weighting manner based on SR in the proposed CSC loss $L_{\text{CSC}}$ really improves the performance of selective classification, we evaluate the model trained using CSC loss without applying weights (i.e., $L^{\prime}_{\text {CSC}} =  \frac{1}{-|P(y)|} \sum_{\bm{x}_p \in P(y)} \log \frac{\exp \left(\bm{z} \cdot \bm{z}_p / \tau\right)}{ \sum_{\bm{x}_a \in A(y)} \exp \left(\bm{z} \cdot \bm{z}_a / \tau\right)}\nonumber$), and the results on the CIFAR-100 dataset are shown in Table~\ref{result-Ablation}. It can be observed that SR-weighted (i.e., original CCL-SC method) consistently achieves lower selective risk than using the unweighted CSC loss across all degrees of coverage, and performs significantly better on at $5/13$ coverage rates, according to the Wilcoxon rank-sum test~\cite{rank-sum} with significance level 0.05. This result demonstrates the effectiveness of CSC loss combined with model confidence for selective classification problems.
\begin{table}[h]

\caption{Selective risk (\%) on the CIFAR-100 dataset for various coverage rates (\%). The means and standard deviations are calculated over 5 trials. The best entries are marked in bold. The symbol `$\bullet$'/`$\circ$' indicates that SR-weighted (i.e., original CCL-SC) is significantly better/worse than Unweighted, according to the Wilcoxon rank-sum test with significance level 0.05.}\vskip 0.15in
\centering
\begin{tabular}{ccc}
\toprule 
    Coverage & SR-weighted & Unweighted\\
    \midrule
    100   & \textbf{26.55±0.26} & 26.72±0.15\ \  \\
    95    & \textbf{23.54±0.15} & 23.84±0.11• \\
    90    & \textbf{20.97±0.20} & 21.28±0.16• \\
    85    & \textbf{18.57±0.20} & 18.69±0.21\  \ \\
    80    & \textbf{16.07±0.15} & 16.15±0.17\ \ \\
    75    & \textbf{13.60±0.19} & 13.67±0.16\ \  \\
    70    & \textbf{11.23±0.16} & 11.40±0.20\ \  \\
    60    & \textbf{6.83±0.15} & \ \ 7.31±0.14• \\
    50    & \textbf{3.95±0.22} & \ \ 4.52±0.13• \\
    40    & \textbf{2.29±0.33} & 2.54±0.23\ \\
    30    & \textbf{1.26±0.17} & 1.47±0.22\ \\
    20    & \textbf{0.71±0.12} & 0.82±0.09\  \\
    10    & \textbf{0.36±0.08} & \ \ 0.58±0.12• \\\bottomrule
    \end{tabular}%
\label{result-Ablation}
\end{table}

\textbf{The contrastive learning method of CCL-SC}  We conduct experiments with different queue sizes $s=300$ and $s=3000$. We first conduct ablation experiments for the construction of negative samples. For convenience, we name the ablation method CCL-SC2. For the negative samples of the samples that are correctly classified as class $y$, CCL-SC2 contains not only the samples misclassified as class $y$ in the queue defined in this paper, but also the samples from other classes in the queue. Table~\ref{result-ccl-sc2} shows that even with the addition of negative samples from other categories, the performance of CCL-SC2 will not be improved compared to the original CCL-SC. However, this leads to more training costs. This implies that the real improvement in model performance is likely to be from the negative samples we define. 

\begin{table}[h]
\caption{Selective risk (\%) on the CIFAR-100 dataset for various coverage rates (\%). The means and standard deviations are calculated over 5 trials. The best entries are marked in bold.}\vskip 0.15in
\centering
\begin{tabular}{ccc|cc}
\toprule 
    Coverage & CCL-SC $ s=300$ & CCL-SC2 $s=300$ & CCL-SC $s=3000$ & CCL-SC2 $s=3000$\\
    \midrule
    100   & 26.73±0.21 & \textbf{26.72±0.28} & 26.55±0.26 & \textbf{26.48±0.27} \\
    95    & 23.89±0.16 & \textbf{23.87±0.18} & \textbf{23.54±0.15} & 23.74±0.29 \\
    90    & \textbf{21.22±0.15} & \textbf{21.22±0.12} & \textbf{20.97±0.20} & 21.12±0.26 \\
    85    & \textbf{18.76±0.17} & 18.85±0.10 & \textbf{18.57±0.20} & 18.67±0.24 \\
    80    & \textbf{16.36±0.21} & 16.37±0.08 & \textbf{16.07±0.15} & 16.20±0.28 \\
    75    & \textbf{13.88±0.18} & 14.05±0.10 & \textbf{13.60±0.19} & 13.79±0.27 \\
    70    & \textbf{11.37±0.23} & 11.58±0.19 & \textbf{11.23±0.16} & 11.42±0.30 \\
    60    & \textbf{7.09±0.10} & 7.30±0.19 & \textbf{6.83±0.15} & 7.50±0.28 \\
    50    & \textbf{4.06±0.23} & 4.36±0.18 & \textbf{3.95±0.22} & 4.68±0.18 \\
    40    & \textbf{2.29±0.18} & 2.47±0.23 & \textbf{2.29±0.33} & 2.62±0.02 \\
    30    & \textbf{1.24±0.08} & 1.32±0.31 & \textbf{1.26±0.17} & 1.44±0.21 \\
    20    & 0.78±0.14 & \textbf{0.73±0.17} & \textbf{0.71±0.12} & 0.97±0.12 \\
    10    & 0.50±0.17 & \textbf{0.49±0.12} & \textbf{0.36±0.08} & 0.58±0.13 \\\bottomrule
    \end{tabular}%
\label{result-ccl-sc2}
\end{table}

To confirm this conclusion, we conduct another ablation study in which we remove the negative samples defined in CCL-SC, and only use randomly sampled samples from other categories as negative samples. We name this ablation method CCL-SC3. The experimental results are shown in Table~\ref{result-ccl-sc3}. It can be observed that CCL-SC3 has a significant performance decrease compared to CCL-SC, which demonstrates the effectiveness of our strategy to construct negative samples.

\begin{table}[h]
\caption{Selective risk (\%) on the CIFAR-100 dataset for various coverage rates (\%). The means and standard deviations are calculated over 5 trials. The best entries are marked in bold.}\vskip 0.15in
\centering
\begin{tabular}{ccc|cc}
\toprule 
    Coverage & CCL-SC $ s=300$ & CCL-SC3 $s=300$ & CCL-SC $s=3000$ & CCL-SC3 $s=3000$\\
    \midrule
100   & \textbf{26.73±0.21} & 27.11±0.09 & \textbf{26.55±0.26} & 26.87±0.35 \\
    95    & \textbf{23.89±0.16} & 24.28±0.14 & \textbf{23.54±0.15} & 23.94±0.41 \\
    90    & \textbf{21.22±0.15} & 21.70±0.17 & \textbf{20.97±0.20} & 21.33±0.31 \\
    85    & \textbf{18.76±0.17} & 19.24±0.20 & \textbf{18.57±0.20} & 18.89±0.28 \\
    80    & \textbf{16.36±0.21} & 16.76±0.21 & \textbf{16.07±0.15} & 16.41±0.21 \\
    75    & \textbf{13.88±0.18} & 14.27±0.19 & \textbf{13.60±0.19} & 13.96±0.14 \\
    70    & \textbf{11.37±0.23} & 11.86±0.25 & \textbf{11.23±0.16} & 11.58±0.17 \\
    60    & \textbf{7.09±0.10} & 7.54±0.20 & \textbf{6.83±0.15} & 7.34±0.37 \\
    50    & \textbf{4.06±0.23} & 4.13±0.12 & \textbf{3.95±0.22} & 4.04±0.17 \\
    40    & 2.29±0.18 & \textbf{2.13±0.04} & 2.29±0.33 & \textbf{2.19±0.11} \\
    30    & 1.24±0.08 & \textbf{1.14±0.14} & 1.26±0.17 & \textbf{1.15±0.08} \\
    20    & 0.78±0.14 & \textbf{0.72±0.05} & \textbf{0.71±0.12} & 0.76±0.10 \\
    10    & 0.50±0.17 & \textbf{0.44±0.08} & \textbf{0.36±0.08} & 0.40±0.14 \\\bottomrule
    \end{tabular}%
\label{result-ccl-sc3}
\end{table}
\vspace{-1.0em}
We also conduct ablation experiments for the whole contrastive method of CCL-SC. Specifically, We introduce the positive and negative sample definition method and loss function from \cite{supcon} into our CCL-SC method, while keeping the other components consistent. We name this ablation method CCL-SC+SupCon. The comparison results on CIFAR-100 are shown in Table~\ref{result-ccl-sc3}. It can be observed that the selective classification performance of the original CCL-SC is better than CCL-SC+SupCon, which uses vanilla supervised CL.

\begin{table}[h]
\caption{Selective risk (\%) on the CIFAR-100 dataset for various coverage rates (\%). The means and standard deviations are calculated over 5 trials. The best entries are marked in bold.}\vskip 0.15in
\centering
\begin{tabular}{ccc|cc}
\toprule 
        Coverage & CCL-SC $ s=300$ & CCL-SC+SupCon $s=300$ & CCL-SC $s=3000$ & CCL-SC+SupCon $s=3000$\\
    \midrule
100   & \textbf{26.73±0.21} & 26.99±0.13 & \textbf{26.55±0.26} & 26.77±0.17 \\
    95    & \textbf{23.89±0.16} & 24.14±0.06 & \textbf{23.54±0.15} & 23.91±0.14 \\
    90    & \textbf{21.22±0.15} & 21.49±0.12 & \textbf{20.97±0.20} & 21.31±0.15 \\
    85    & \textbf{18.76±0.17} & 19.10±0.13 & \textbf{18.57±0.20} & 18.80±0.17 \\
    80    & \textbf{16.36±0.21} & 16.64±0.24 & \textbf{16.07±0.15} & 16.36±0.22 \\
    75    & \textbf{13.88±0.18} & 14.19±0.19 & \textbf{13.60±0.19} & 13.98±0.26 \\
    70    & \textbf{11.37±0.23} & 11.74±0.21 & \textbf{11.23±0.16} & 11.63±0.09 \\
    60    & \textbf{7.09±0.10} & 7.26±0.16 & \textbf{6.83±0.15} & 7.24±0.10 \\
    50    & \textbf{4.06±0.23} & 4.15±0.07 & \textbf{3.95±0.22} & 4.13±0.03 \\
    40    & 2.29±0.18 & \textbf{2.25±0.20} & \textbf{2.29±0.33} & 2.32±0.09 \\
    30    & \textbf{1.24±0.08} & 1.25±0.17 & \textbf{1.26±0.17} & 1.27±0.11 \\
    20    & \textbf{0.78±0.14} & 0.82±0.13 & \textbf{0.71±0.12} & 0.87±0.10 \\
    10    & 0.50±0.17 & \textbf{0.48±0.10} & \textbf{0.36±0.08} & 0.50±0.09 \\\bottomrule
    \end{tabular}%
\label{result-supcon}
\end{table}

We then conduct sensitivity analyses on the hyper-parameters in our method. Specifically, when varying one hyperparameter, we keep the other hyper-parameters fixed.

\textbf{Momentum coefficient $\bm{q}$}. Tabel~\ref{result-m} shows the influence of momentum coefficient $q \in \{0, 0.9, 0.99, 0.999\}$ on the selective classification performance of our method. Our method achieves stable selective risk when employing $q$ values in $\{0.9, 0.99, 0.999\}$. Specifically, our method exhibits inferior performance when $q$ is set to 0 compared to other values. This can be attributed to the momentum encoder used for constructing positive and negative sample features losing its momentum update properties, where Eq.~\eqref{update} degenerates into $\bm{\theta}_{\mathrm{m}} = \bm{\theta}$. Consequently, there is a significant reduction in the consistency of feature representations in the queue. This observation is consistent with the findings reported in~\cite{moco}.
\begin{table*}[h]
\caption{Selective risk (\%) of CCL-SC using various momentum coefficient $m$ for various coverage rates (\%) on the CIFAR-100. The means and standard deviations are calculated over 5 trials. The best entries are marked in bold.}\vskip 0.15in
\centering
\begin{tabular}{ccccc}
\toprule 
    Coverage & $q = 0$    & $q = 0.9$  & $q = 0.99$ & $q = 0.999$ \\
    \midrule
100   & 26.72±0.21 & \textbf{26.41±0.22} & 26.55±0.26 & 26.52±0.29 \\
    95    & 23.92±0.30 & 23.56±0.15 & \textbf{23.54±0.15} & 23.64±0.27 \\
    90    & 21.27±0.20 & \textbf{20.91±0.21} & 20.97±0.20 & 20.96±0.32 \\
    85    & 18.71±0.27 & \textbf{18.47±0.17} & 18.57±0.20 & 18.51±0.26 \\
    80    & 16.19±0.31 & \textbf{15.96±0.22} & 16.07±0.15 & 15.97±0.24 \\
    75    & 13.74±0.33 & \textbf{13.53±0.19} & 13.60±0.19 & 13.61±0.30 \\
    70    & 11.33±0.28 & \textbf{11.19±0.18} & 11.23±0.16 & 11.18±0.16 \\
    60    & 6.98±0.15 & 6.94±0.13 & \textbf{6.83±0.15} & 6.97±0.12 \\
    50    & 4.10±0.07 & 4.05±0.17 & \textbf{3.95±0.22} & 4.05±0.20 \\
    40    & 2.34±0.19 & 2.41±0.21 & 2.29±0.33 & \textbf{2.26±0.23} \\
    30    & 1.33±0.18 & 1.30±0.19 & 1.26±0.17 & \textbf{1.23±0.20} \\
    20    & 0.90±0.29 & 0.81±0.16 & 0.71±0.12 & \textbf{0.67±0.14} \\
    10    & 0.60±0.21 & 0.46±0.19 & 0.36±0.08 & \textbf{0.34±0.05} \\ \midrule
    Avg. Rank & 3.92  & 2.00  & 2.15  & \textbf{1.85}  \\
    \bottomrule
    \end{tabular}%

\label{result-m}
\end{table*}

\textbf{Queue size $\bm{s}$}. Tabel~\ref{result-k} shows the performance comparison of our method when maintaining different queue sizes $s \in \{300, 1000, 3000, 10000, 50000\}$. Surprisingly, our method demonstrates performance improvements compared to previous methods, shown in Table~\ref{result-celeba&cifar100}, even when the queue size is set to a remarkably small value, such as 300. Moreover, as the queue size increases, our method exhibits further improvements in performance, particularly in cases with higher coverage.
\begin{table*}[h]
\caption{Selective risk (\%) of CCL-SC using various queue size $s$ for various coverage rates (\%) on the CIFAR-100. The means and standard deviations are calculated over 5 trials. The best entries are marked in bold.}\vskip 0.15in
\centering
\begin{tabular}{cccccc}
\toprule 
    Coverage & $s = 300$  & $s = 1000$ & $s = 3000$ & $s = 10000$ & $s = 50000$ \\
    \midrule
100   & 26.73±0.21 & 26.71±0.18 & 26.55±0.26 & 26.45±0.24 & \textbf{26.08±0.04} \\
    95    & 23.89±0.16 & 23.79±0.19 & 23.54±0.15 & 23.62±0.25 & \textbf{23.22±0.07} \\
    90    & 21.22±0.15 & 21.09±0.29 & 20.97±0.20 & 21.00±0.27 & \textbf{20.66±0.08} \\
    85    & 18.76±0.17 & 18.65±0.31 & 18.57±0.20 & 18.39±0.34 & \textbf{18.19±0.13} \\
    80    & 16.36±0.21 & 16.22±0.37 & 16.07±0.15 & 15.84±0.34 & \textbf{15.73±0.08} \\
    75    & 13.88±0.18 & 13.73±0.32 & 13.60±0.19 & \textbf{13.31±0.42} & 13.42±0.11 \\
    70    & 11.37±0.23 & 11.44±0.24 & 11.23±0.16 & \textbf{11.01±0.33} & 11.09±0.17 \\
    60    & 7.09±0.10 & 7.15±0.18 & \textbf{6.83±0.15} & 6.88±0.24 & 6.94±0.18 \\
    50    & 4.06±0.23 & 4.29±0.16 & \textbf{3.95±0.22} & 4.13±0.21 & 4.15±0.14 \\
    40    & 2.29±0.18 & 2.37±0.16 & 2.29±0.33 & \textbf{2.26±0.24} & 2.44±0.09 \\
    30    & 1.24±0.08 & 1.21±0.14 & 1.26±0.17 & \textbf{1.17±0.07} & 1.42±0.21 \\
    20    & 0.78±0.14 & 0.70±0.20 & 0.71±0.12 & \textbf{0.65±0.08} & 0.75±0.20 \\
    10    & 0.50±0.17 & 0.48±0.19 & \textbf{0.36±0.08} & 0.40±0.09 & 0.48±0.12 \\\midrule
    Avg. Rank & 4.23  & 3.85  & 2.38  & \textbf{1.85}  & 2.54  \\
    \bottomrule
    \end{tabular}%
\label{result-k}
\end{table*}

\textbf{Weight coefficients $\bm{w}$}. Table~\ref{result-weight} presents a comparison of the performance of models trained with varying $w \in \{0.1, 0.5, 1.0, 2.0\}$ applied to the CSC loss. It can be observed that when a relatively larger weight coefficient is assigned, that is, when the CSC loss has a greater impact on model training, the resulting models exhibit better performance, which also confirms the effectiveness of the CSC loss.
\begin{table*}[h]
\caption{Selective risk (\%) of CCL-SC using various weight coefficient $w$ on the CIFAR-100. The means and standard deviations are calculated over 5 trials. The best entries are marked in bold.} \vskip 0.15in
\centering
\begin{tabular}{ccccc}
\toprule 
    Coverage & $w = 0.1$  & $w = 0.5$  & $w = 1.0$  & $w = 2.0$ \\
    \midrule
100   & 26.90±0.04 & 26.59±0.09 & 26.55±0.26 & \textbf{26.25±0.09} \\
    95    & 24.12±0.08 & 23.79±0.13 & 23.54±0.15 & \textbf{23.41±0.16} \\
    90    & 21.44±0.12 & 21.08±0.14 & 20.97±0.20 & \textbf{20.79±0.14} \\
    85    & 19.12±0.15 & 18.68±0.21 & 18.57±0.20 & \textbf{18.34±0.19} \\
    80    & 16.59±0.21 & 16.27±0.26 & 16.07±0.15 & \textbf{15.89±0.19} \\
    75    & 14.24±0.15 & 13.83±0.21 & 13.60±0.19 & \textbf{13.48±0.15} \\
    70    & 11.65±0.21 & 11.39±0.27 & 11.23±0.16 & \textbf{11.12±0.11} \\
    60    & 7.37±0.26 & 7.06±0.13 & \textbf{6.83±0.15} & 7.09±0.15 \\
    50    & 4.20±0.18 & 3.97±0.07 & \textbf{3.95±0.22} & 4.24±0.24 \\
    40    & 2.39±0.15 & 2.34±0.18 & \textbf{2.29±0.33} & 2.39±0.22 \\
    30    & 1.40±0.20 & 1.29±0.18 & \textbf{1.26±0.17} & 1.31±0.12 \\
    20    & 0.96±0.09 & 0.83±0.21 & \textbf{0.71±0.12} & 0.75±0.16 \\
    10    & 0.76±0.08 & 0.48±0.16 & \textbf{0.36±0.08} & 0.50±0.18 \\ \midrule
    Avg. Rank & 3.85  & 2.62  & \textbf{1.54 } & 1.92  \\\bottomrule
    \end{tabular}%
\label{result-weight}
\end{table*}

\textbf{Initial epochs $\bm{E_s}$}. Table~\ref{result-Es} compares the performance of our method when using different initial epochs $E_s \in \{50, 100, 150, 200, 250\}$. It can be observed that our method consistently exhibits stable and robust performance when $E_s$ is set between 50 and 150. The performance tends to deteriorate only if the initial epochs are set too large (i.e., our training mechanism is utilized too late), which leads to insufficient convergence and fluctuations in model performance.
\begin{table*}[h]
\caption{Selective risk (\%) of CCL-SC using various initial epochs $E_s$ for various coverage rates (\%) on the CIFAR-100. The means and standard deviations are calculated over 5 trials. The best entries are marked in bold.}\vskip 0.15in
\centering
\begin{tabular}{cccccc}
\toprule 
    Coverage & $E_s = 50$  & $E_s = 100$ & $E_s = 150$ & $E_s = 200$ & $E_s = 250$ \\
    \midrule
100   & 26.71±0.16 & 26.56±0.18 & \textbf{26.55±0.26} & 26.81±0.24 & 27.13±0.34 \\
    95    & 23.88±0.23 & 23.69±0.15 & \textbf{23.54±0.15} & 23.97±0.23 & 24.25±0.35 \\
    90    & 21.27±0.21 & 21.16±0.08 & \textbf{20.97±0.20} & 21.34±0.17 & 21.65±0.34 \\
    85    & 18.71±0.26 & 18.65±0.08 & \textbf{18.57±0.20} & 18.86±0.23 & 19.20±0.28 \\
    80    & 16.30±0.27 & 16.15±0.12 & \textbf{16.07±0.15} & 16.37±0.21 & 16.77±0.27 \\
    75    & 13.84±0.23 & 13.73±0.16 & \textbf{13.60±0.19} & 13.91±0.15 & 14.40±0.32 \\
    70    & 11.50±0.21 & 11.38±0.29 & \textbf{11.23±0.16} & 11.43±0.27 & 12.02±0.24 \\
    60    & 7.04±0.11 & 7.06±0.21 & \textbf{6.83±0.15} & 7.16±0.24 & 7.55±0.17 \\
    50    & 4.28±0.19 & 4.16±0.12 & \textbf{3.95±0.22} & 4.10±0.17 & 4.59±0.25 \\
    40    & 2.36±0.18 & \textbf{2.20±0.16} & 2.29±0.33 & 2.48±0.36 & 3.13±0.34 \\
    30    & 1.22±0.10 & \textbf{1.18±0.11} & 1.26±0.17 & 1.51±0.24 & 2.22±0.29 \\
    20    & \textbf{0.58±0.12} & 0.79±0.08 & 0.71±0.12 & 0.94±0.20 & 1.57±0.14 \\
    10    & 0.38±0.16 & 0.46±0.16 & \textbf{0.36±0.08} & 0.62±0.21 & 0.94±0.41 \\\midrule
    Avg. Rank &2.77  & 2.15  & \textbf{1.31}  & 3.77  & 5.00  \\
    \bottomrule
    \end{tabular}%

\label{result-Es}
\end{table*}

\section{Further improvement of CCL-SC}\label{improve-sec}
Since CCL-SC operates on the feature representation of the model, it can be seamlessly integrated with existing methods that optimize the model at the classification layer. In this section, we combine CCL-SC with SAT~\cite{NIPS:SAT,SAT} and EM~\cite{Entropy+SR} methods. Specifically, when the current epoch is greater than $E_s$, for a sample $\bm{x}$ with label $y$, we modify the loss function at the classification layer of the model from the cross-entropy loss to the following form:
\begin{align*}
L_{\mathrm{SAT+EM}}=-{t}_{y} \log f_{y}(\bm{x}) - \left(1-{t}_{y}\right) \log f_{(k+1)}(\bm{x}) +\beta ~\mathcal{H}\left(f(\bm{x})\right),
\end{align*}
where $\mathcal{H}$ is the entropy function, and $\beta$ controls the weight of its influence. The training target $\bm{t}$ is dynamically updated using the rule $\bm{t} \leftarrow m_\text{SAT} \cdot \bm{t}+(1-m_\text{SAT}) \cdot f(\bm{x})$, where the momentum term $m_\text{SAT} \in(0,1)$ regulates the weighting of predictions. The first term ${t}_{y} \log f_{y}(\bm{x})$ of $L_{\mathrm{SAT+EM}}$ encourages the model to correctly classify the samples, while the second term $\left(1-{t}_{y}\right) \log f_{(k+1)}(\bm{x})$ encourages the model to abstain from making predictions on samples with low confidence. Due to the combination of the SAT and EM methods, two new hyper-parameters, $m_\text{SAT}$ and $\beta$, are introduced. Here we do not adjust these hyper-parameters but rather directly use the settings for $m_\text{SAT}$ and $\beta$ as introduced in Appendix~\ref{setting sec}.

Table~\ref{result-improvement} and Tabel~\ref{result-improvement-imagenet} present comparisons between the original CCL-SC and the improved CCL-SC (i.e., CCL-SC+SAT+EM) on CIFAR-100 and ImageNet, respectively. The results demonstrate that the improved CCL-SC shows superior performance in selective classification and outperforms the original CCL-SC across all degrees of coverage except 40\% on ImageNet. This discovery shows that CCL-SC not only exhibits superior performance when utilized independently but also highlights high compatibility with other methods to further enhance the performance of selective classification.
\begin{table}[h]
\caption{Selective risk (\%) on the CIFAR-100 dataset for various coverage rates (\%). The means and standard deviations are calculated over 5 trials. The best entries are marked in bold.}\vskip 0.15in
\centering
\begin{tabular}{ccc}

\toprule 
    Coverage & CCL-SC & CCL-SC+SAT+EM\\
    \midrule
    100   & 26.55±0.26 & \textbf{26.41±0.08} \\
    95    & 23.54±0.15 & \textbf{23.51±0.16} \\
    90    & 20.97±0.20 & \textbf{20.85±0.20} \\
    85    & 18.57±0.20 & \textbf{18.31±0.17} \\
    80    & 16.07±0.15 & \textbf{15.79±0.17} \\
    75    & 13.60±0.19 & \textbf{13.41±0.15} \\
    70    & 11.23±0.16 & \textbf{11.06±0.19} \\
    60    & 6.83±0.15 & \textbf{6.72±0.18} \\
    50    & 3.95±0.22 & \textbf{3.67±0.15} \\
    40    & 2.29±0.33 & \textbf{1.97±0.18} \\
    30    & 1.26±0.17 & \textbf{1.05±0.10} \\
    20    & 0.71±0.12 & \textbf{0.49±0.11} \\
    10    & 0.36±0.08 & \textbf{0.26±0.05} \\\bottomrule
    \end{tabular}%
\label{result-improvement}
\end{table}

\begin{table}[h]
\caption{Selective risk (\%) on ImageNet dataset for various coverage rates (\%). The means and standard deviations are calculated over 5 trials. The best entries are marked in bold.}\vskip 0.15in
\centering
\begin{tabular}{ccc}

\toprule 
    Coverage & CCL-SC & CCL-SC+SAT+EM\\
    \midrule
    100   & 26.26±0.10 & \textbf{26.01±0.12} \\
    90    & 20.68±0.07 & \textbf{20.41±0.06} \\
    80    & 15.76±0.07 & \textbf{15.46±0.03} \\
    70    & 11.39±0.10 & \textbf{11.08±0.02} \\
    60    & 7.55±0.09 & \textbf{7.36±0.05} \\
    50    & 4.79±0.04 & \textbf{4.76±0.01} \\
    40    & \textbf{2.95±0.04} & 2.99±0.03 \\
    30    & \textbf{1.83±0.05} & \textbf{1.83±0.06} \\
    20    & 1.22±0.05 & \textbf{1.17±0.07} \\
    10    & 0.72±0.05 & \textbf{0.66±0.07} \\\bottomrule
    \end{tabular}%
\label{result-improvement-imagenet}
\end{table}

\end{document}